\def\keyFont{\fontsize{8}{11}\helveticabold }
\def\firstAuthorLast{} 
\def\Authors{Jun Inukai\,$^{1, \dagger}$, Tadahiro Taniguchi\,$^{1, \dagger,*}$, Akira Taniguchi\,$^{1}$, and Yoshinobu Hagiwara\,$^{1}$}
\theoremstyle{plain}
\newtheorem{thm}{Theorem}
\newtheorem*{thm*}{Theorem}
\theoremstyle{remark} 
\theoremstyle{plain}
\newtheorem{theorem}{Theorem}[section]
\newtheorem{corollary}[theorem]{Corollary}
\theoremstyle{definition}
\theoremstyle{remark}
\begin{document}
\onecolumn
\firstpage{1}
\title{Recursive Metropolis-Hastings Naming Game:\\
Symbol Emergence in a Multi-agent System based on Probabilistic Generative Models}
\author[\firstAuthorLast ]{\Authors } 

\address{} 
\correspondance{} 
\extraAuth{}
\extraAuth{$\dagger$ These authors contributed equally to this work and share the first authorship}
\maketitle
\begin{abstract}
In the studies on symbol emergence and emergent communication in a population of agents, a computational model was employed in which agents participate in various language games. Among these, the Metropolis-Hastings naming game (MHNG) possesses a notable mathematical property: symbol emergence through MHNG is proven to be a decentralized Bayesian inference of representations shared by the agents. However, the previously proposed MHNG is limited to a two-agent scenario. This paper extends MHNG to an $N$-agent scenario.
The main contributions of this paper are twofold: (1) we propose the recursive Metropolis-Hastings naming game (RMHNG) as an $N$-agent version of MHNG and demonstrate that RMHNG is an approximate Bayesian inference method for the posterior distribution over a latent variable shared by agents, similar to MHNG; and (2) we empirically evaluate the performance of RMHNG on synthetic and real image data, enabling multiple agents to develop and share a symbol system. Furthermore, we introduce two types of approximations—one-sample and limited-length—to reduce computational complexity while maintaining the ability to explain communication in a population of agents.
The experimental findings showcased the efficacy of RMHNG as a decentralized Bayesian inference for approximating the posterior distribution concerning latent variables, which are jointly shared among agents, akin to MHNG. Moreover, the utilization of RMHNG elucidated the agents' capacity to exchange symbols. Furthermore, the study discovered that even the computationally simplified version of RMHNG could enable symbols to emerge among the agents.
\tiny
\keyFont{\section{Keywords:} symbol emergence, emergent communication, probabilistic generative models, language game, Bayesian inference, multi-agent system}
\end{abstract}

\section{Introduction}
The origin of language remains one of the most intriguing mysteries of human evolution~\citep{deacon1997symbolic,christiansen2022language,Steels15}. Humans utilize various symbol systems, including gestures and traffic lights. Although the language is considered a type of symbol (or sign) system, it boasts the richest structure and the strongest ability to describe events among all symbol systems~\citep{Chandler2002}. The adaptive, dynamic, and emergent nature of symbol systems is a common feature in human society~\citep{taniguchi2018symbol,taniguchi2021semiotically}.
This paper focuses on the emergent nature of general symbols and their meanings, rather than the structural complexity of language.
The meaning of signs is determined within society in bottom-up and top-down manners, owing to the nature of symbol systems~\citep{taniguchi2016symbol}. More specifically, the self-organized (or emergent) symbol system enables each agent to communicate semiotically with others, while being subject to the top-down constraints of the emergent symbol system. Agent-invented symbols can hold meaning within a society of multiple agents, even though no agent can directly observe the intention in the brain of a speaker.  Peirce, the founder of semiotics, defines a symbol as a triadic relationship between sign, object, and interpretant~\citep{Chandler07}. The interpretant serves as a mediator between the sign and the object. In nature, the relationship between sign and object exhibits arbitrariness. This implies that human society—a multi-agent system using a symbol system—must form and maintain these relationships within a society in a decentralized manner. Research on symbol emergence, language evolution, and emergent communication has been addressing this issue using a constructive approach for decades. 

Studies on emergent communication takes many forms. Numerous studies have explored emergent communication by engaging agents in Lewis-style signaling games, such as referential games. ~\citet{Lazaridou17,lazaridou2018emergence} and \citet{havrylov2017emergence} demonstrated that agents can communicate using their own language by performing reference games. Furthermore, ~\citet{choi2018compositional} and \citet{mu2021emergent} suggested that the compositionality of emergent language can be improved by modifying the real image data used in reference games. However, \citet{bouchacourt2018agents} highlighted the issue of agents being able to communicate even when using uninterpretable images in referential games. \citet{noukhovitch2021} demonstrated the necessity of referential games for agent communication. Numerous studies have also attempted emergent communication with multiple agents. \citet{gupta2021dynamic} explored extending to multiple agents using meta-learning, while \citet{lin2021learning} employed autoencoder, a standard representation learning algorithm. \citet{chaabouni2022emergent} investigated the effects of varying the number of agents in referential games on agent communication. These studies successfully achieved communication through games that provided rewards.

In contrast, \citet{taniguchi2023emergent} proposed an alternative formulation of emergent communication based on probabilistic generative models and the assumption of joint attention. The Metropolis-Hastings naming game (MHNG) was introduced to explain the process by which two agents share the meaning of signs in a bottom-up manner from a Bayesian perspective. It was demonstrated that symbol emergence can be considered decentralized Bayesian inference. MHNG assumes {\it joint attention} between two agents—widely observed in human infants learning vocabularies—instead of reward feedback from a listener to a speaker. This idea is rooted in the concept of a symbol emergence system~\citep{taniguchi2016symbol,taniguchi2018symbol}, rather than the view of an emergent communication channel often assumed in emergent communication studies based on Lewis-style signaling game.
The notion of a symbol emergence system was proposed to capture the overall dynamics of symbol emergence from the perspective of emergent systems, i.e., complex systems exhibiting emergent properties. This approach aims to further investigate the fundamental cognitive mechanisms enabling humans to organize symbol systems within a society in a bottom-up manner. 

In this paper, we use the term {\it symbol system}
 in a restricted sense. Here, a symbol system simply refers to a set of signs and their (probabilistic) relationship to objects. In the context of studies on symbol emergence and emergent communication, we cannot assume a ground-truth relationship between signs and objects, unlike many studies in artificial intelligence, e.g., standard pattern recognition task that assumes a ground-truth label given to each object.    
Ideally, a multi-agent system should form a symbol system with which agents can appropriately categorize (or differentiate) objects and associate signs with objects. The definition of appropriate categorization and sign sharing is crucial to the formulation of symbol emergence. Different approaches assume different goals of symbol emergence and criteria based on various hypothetical principles. {\it Iterated learning} assumes that the goal of symbol emergence is for each agent to use the same sign for each object. In contrast, {\it emergent communication based on referential games} assumes that organizing signs allows a speaker to provide information that enables a listener to choose an object intended by the speaker. Taniguchi et al. proposed a collective predictive coding (CPC) hypothesis in the discussion of~\citet{taniguchi2023emergent}. The CPC hypothesis posits that the goal of symbol emergence is the formation of global representations created by agents in a decentralized manner. This can also be called {\it social representation learning}, i.e., symbol emergence is conducted as a representation learning process by a group of individuals in a decentralized manner. From a Bayesian perspective, this can be regarded as decentralized Bayesian inference.

\begin{figure}[tb!p]
    \centering
    \includegraphics[width=\linewidth]{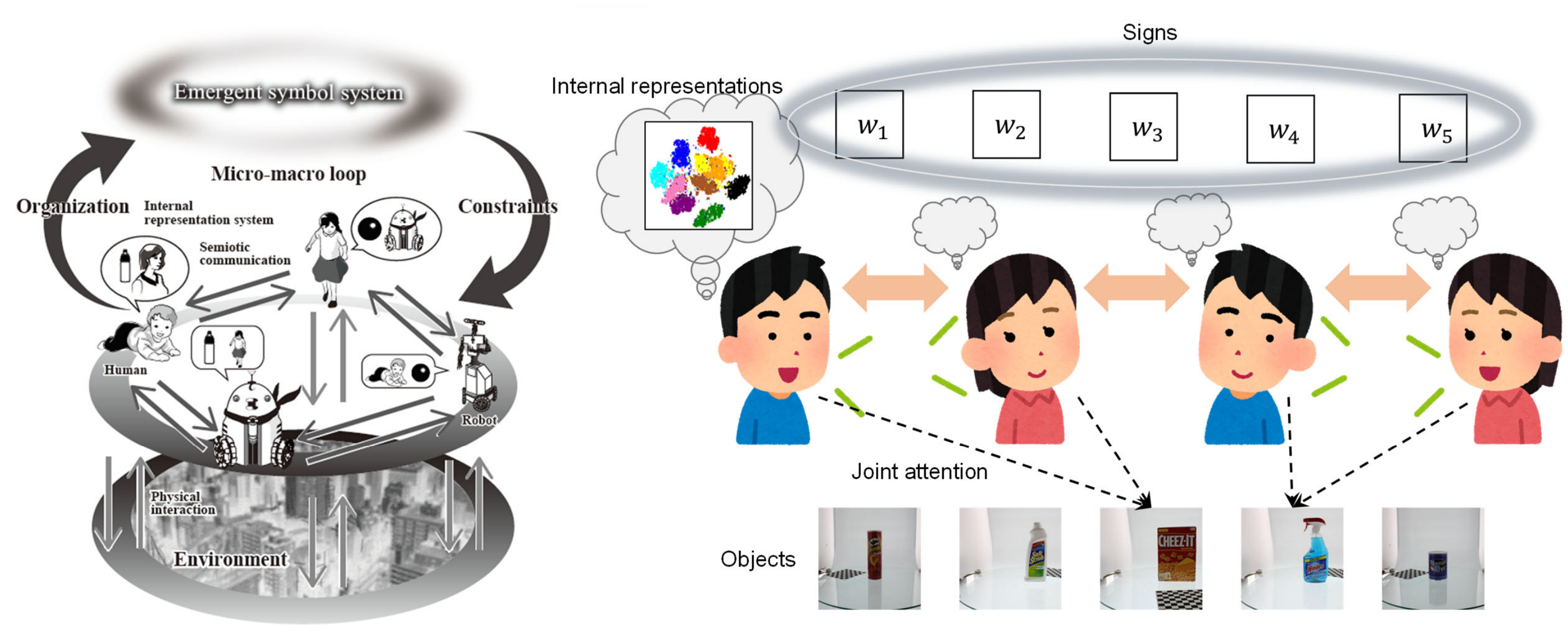}
    \caption{Left: an overview of symbol emergence system~\citep{taniguchi2016symbol}. Right: an overview of recursive Metropolis-Hastings naming game played among multiple agents.}
    \label{fig:overview}
\end{figure}

The MHNG was proposed, demonstrating that the language game enables two agents (agents 1 and 2) to form a symbol system, with MHNG's process mathematically considered as a Bayesian inference of $p(w\mid o^1, o^2)$, where $o^1$ and $o^2$ represent the observations of agents 1 and 2, and $w$ represents the shared representations, i.e., signs. 
Furthermore, MHNG does not assume the existence of explicit feedback from the listener to the speaker in the game, unlike Lewis-style signaling games widely employed in emergent communication studies. Instead, MHNG assumes joint attention, considered foundational to language acquisition during early developmental stages~\citep{cangelosi2015developmental}. The CPC hypothesis and MHNG are based on {\it generative models} rather than {\it discriminative models}, which are prevalent in the dominant approach to emergent communication in the deep learning community. The MHNG and results of constructive studies substantiate the CPC hypothesis in a tangible manner~\citep{taniguchi2023emergent}.
However, existing studies on MHNG only demonstrate that the game can become a decentralized approximate Bayesian inference procedure in a two-agent scenario. No theoretical research or evidence exists to show that the CPC hypothesis can hold in more general cases, i.e., in $N$-agent settings where $N \ge 3$. In other words, it is crucial to determine whether a language game can perform decentralized approximate Bayesian inference of $p(w\mid o^{1:N})$, where $o^{1:N} = \{o^1, o^2, \ldots,o^n,\ldots ,o^N \}$, and $o^n$ represents the observations of the $n$-th agent.

The fundamental reason why the MHNG can act as an approximate Bayesian inference of $p(w\mid o^1, o^2)$ is that the utterance of a sign $w \sim p(w\mid o^{Sp})$ by the speaker agent $Sp$ can be sampled based on agent $Sp$'s observations alone, and the acceptance ratio of the sign (i.e., the message) can be solely determined by the listener $Li$ based on its own observations and internal state. These properties are derived from the theory of the Metropolis-Hastings algorithm. MHNG has a solid theoretical basis in Markov Chain Monte Carlo (MCMC)~\citep{Hastings70}. 
However, the proof provided by \citet{taniguchi2023emergent} assumed that the naming game is played between only two agents. This assumption was based on the need for individual agents to make the proposal sampling of a sign and the acceptance/rejection decision, respectively, without direct observation of the internal states of the other agent.
Due to the difficulty, a naming game having the same theoretical property as MHNG for the $N$-agent ($N \ge 3$) case has not been proposed.

The goal of this paper is to extend the MHNG to the $N$-agent ($N \ge 3$) scenario and show that the extended naming game can act as an approximate Bayesian inference algorithm for $p(w\mid o^{1:N})$. The main idea of the proposed method is the introduction of a{\it recursive structure} into the MHNG. Let us consider a $3$-agent case. If $w \sim p(w\mid o^1, o^2)$ can be sampled in the MH algorithm, the acceptance ratio for the third agent can be calculated based on the third agent's internal states, and the communication can be regarded as a sampling process of $p(w\mid o^1, o^2, o^3)$. Notably, $w \sim p(w\mid o^1, o^2)$ can be sampled using the original two-agent MHNG. By extending this idea in a recursive manner, we can develop a recursive MHNG (RMHNG). The details will be described in Section~\ref{sec:MH-NG}.

The main contributions of this paper are twofold. 
\begin{itemize}    
\item We propose the RMHNG played between $N$ agents and provide mathematical proof that the RMHNG acts as an approximate Bayesian inference method for the posterior distribution over a latent variable shared by the agents given the observations of all the agents.    \item The performance of the RMHNG is empirically demonstrated on synthetic data and real image data. The experiment shows that the RMHNG enables more than two agents to form and share a symbol system. The inferred distributions of signs are shown to be a posterior distribution over $p(w\mid o^{1:N})$ in an empirical manner. To reduce computational complexity and maintain applicability for the explanation of communication in human society, two types of approximations, i.e., (1) one-sample (OS) approximation and (2) limited-length (LL) approximation, are proposed and both are validated through experiment. 
\end{itemize}
The remainder of this paper is structured as follows. In section~\ref{sec:MH-NG}, we describe RMHNG, explaining its assumed generative model, algorithms, and theoretical results. Additionally, a practical approximation is provided. Section 3 presents an experiment using synthetic data and demonstrates the RMHNG empirically. Section 4 presents an experiment using the YCB object dataset~\citep{calli2015ycb}, which contains real images of everyday objects. In Section 5, we engage in a comprehensive discussion. Finally, we conclude the paper in Section 6.

\section{Recursive Metropolis-Hastings naming game (RMHNG)}\label{sec:MH-NG}
\subsection{Overview}
The RMHNG is a language game played between multiple agents ($N \ge 2$). It is an extension of the original MHNG. When $N=2$, the RMHNG is equivalent to the original MHNG. Notably, the game does not allow agents to give any feedback to other agents during the game, unlike Lewis-style signaling games~\citep{lewis2008convention}, which have been used in studies of emergent communication. Instead, the game assumes joint attention.
Generally, when we ignore the representation learning parts, the original MHNG is played as follows:
\begin{enumerate}    
\item For each object $d \in \mathbb{D}$, the $n$-th agent (where $n\in \{1, 2\}$) views the $d$-th object and infers the internal state $x_d^n$, its percept, from its observations $o_d^n$, i.e., calculate $p(x_d^n\mid o_d^n)$ or sample $x_d^n \sim p(x_d^n\mid o_d^n)$, where $\mathbb{D}$ is a set of object. Set the initial roles to $\{Sp, Li\} = \{1, 2\}$. 
\item The $Sp$-th agent 
says a sign $w_d^{Sp}$ (i.e., a word) corresponding to the $d$-th object in a probabilistic manner by sampling a word from the posterior distribution over words (i.e., $w_d^{Sp} \sim p(w_d\mid x_d^{Sp})$) for each $d$.    
\item Let a counterpart, that is, a listener, be $Li$-th agent. 
Assuming that the listener is looking at the same object, i.e., joint attention, the listener determines whether to accept the word based on its belief state with probability $r = {\rm min}\left(1, 
    \frac{
    P(x^{Li}_{d}\mid \theta^{Li},w_d^{Sp})}
    {
    P(x^{Li}_{d}\mid \theta^{Li}, w^{Li}_{d})         
    }
    \right)$.  A listener updates its internal parameter $\theta^{Li}$.
\item They alternate their roles, i.e., take turns, and go back to 2.
\end{enumerate}
The RMHNG extends the original MHNG to allow for communication between multiple agents ($N \ge 3$) and forms a shared symbol system among them. The key idea of RMHNG is as follows:
\begin{enumerate}    
\item In an RMHNG played by $M$ agents, we recursively use an RMHNG played by $M-1$ agents as a proposal distribution of $w_d$, which corresponds to a speaker in the original MHNG.    Note that, an RMHNG played by $M-1$ agents $(1, \ldots ,M-1)$ is a sampler of an approximate distribution of $p(w_d\mid x_{1:M-1})$.    
\item An RMHNG played by two agents ($N=2$) is equivalent to an original MHNG.
\end{enumerate}
Consequently, when played by $N$ agents, the RMHNG approximates the distribution of $p(w_d\mid x_{1:N})$ through mathematical induction.

\subsection{Generative model}

 \begin{table}[btp]
     \centering
     \begin{tabular}{cc} \hline
     Variable & Explanation  \\\hline
       $w_d$   &  A sign, e.g., a name, for the $d$-th object.\\
       $x^*_d$   & Perceptual state or feature vector corresponding to the $d$-th object. \\
       $o^*_d$   & Observation for the $d$-th object   \\
       $\theta^*$   &  Knowledge about the relations between signs and perceptual states.  \\
       $\phi^*$   & Knowledge about relations between perceptual states and observations. \\
       $\alpha$   &  A hyperparameter for  $\theta^*$ \\
       $\beta$   &   A hyperparameter for  $\phi^*$       \\\hline
     \end{tabular}
     \caption{Variables of Inter-PGM and their explanations. Superscript $* \in \mathbb{N} $ refers to a specific agent.}
     \label{tab:inter-pgm} 
 \end{table}

\begin{figure}[tb!p]
    \centering
    \includegraphics[width=1.0\linewidth]{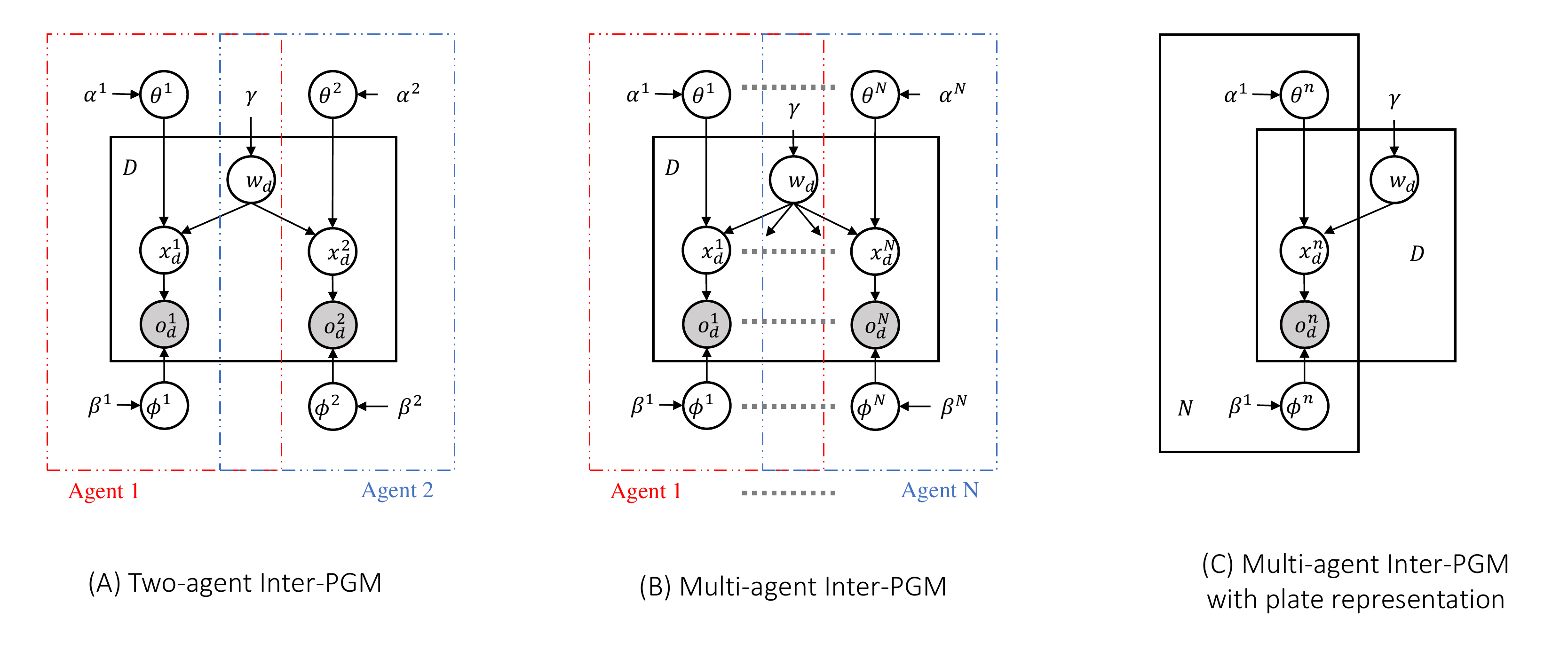}
    \caption{Probabilistic graphical models considered for MHNG and RMHMG.
    (A) PGM is for MHNG, i.e., a two-agent scenario called two-agent Inter-PGM. (B) PGM is a generalization of PGM in (A), i.e., a multi-agent scenario ($N \ge 2$), called multi-agent Inter-PGM. $n$-th agent has variables for observations $o^n_d$, internal representations $w^n_d$ for the $d$-th object ($1 \le d \le D$). $n$-th agent has global parameters $\phi^n$ and $\theta^n$ and hyperparameters. Variable $w_d$ is a shared latent variable, and concrete samples drawn from the posterior distribution over $w_d$ are regarded as an utterance, i.e., a sign. (C) PGM shows a concise representation of (B) using plate representations (i.e., (B) and (C) represent the same probabilistic generative process).}
    \label{fig:pgms}
\end{figure}

Figure~\ref{fig:pgms} presents three probabilistic graphical models (PGMs) representing the interactions between multiple agents sharing a latent variable $w_d$.
(A) The left panel shows a PGM that integrates two PGMs representing two agents with a shared latent variable $w_d$. This model is referred to as the two-agent Inter-PGM.
(B) The center panel generalizes the PGM in (A) to integrate PGMs representing $N$ agents. This model can be considered a multimodal PGM in which a shared latent variable integrates multimodal observations. We refer to this model as the multi-agent Inter-PGM.
(C) The right panel provides a concise representation of (B) using plate representations, meaning (B) and (C) represent the same probabilistic generative process.
When agent $n$ observes the $d$-th object, they receive observations $x_d^n$ and infer their internal representation $x_d^n$. A latent variable representing a word, $w_d$, is shared among the agents. 
The inference of $\theta^n$ and $x^n_d$ corresponds to a general representation learning problem. As studied in ~\citet{taniguchi2023emergent}, 
introducing the Symbol Emergence in Robotics Toolkit (SERKET) framework~\citep{nakamura2018serket,taniguchi2020neuro} allows us to decompose the main part of the naming game (exchanging signs $w^n_d$ between agents) and the representation learning part (inferencing $x^n_d$ and $\theta^n$).
For simplicity and to focus on the extension of the MHNG, we assume that $x_d^n$ is observable throughout the paper and concentrate on inferring $\theta^n$ and $w_d$ through the language game.


\subsection{Inference as a naming game}
The RMHNG, like the MHNG, acts as a decentralized approximate Bayesian inference based on the MH algorithm. A standard inference scheme for $p(w_d\mid x_d^{1:N})$ in Figure~\ref{fig:pgms} (C) requires the information about $x_d^{1:N}$, e.g., the posterior distribution $p(x_d\mid o_d^{1:N})$. However, $x_d^{1:N}$ are internal representations of each agent, and the agents cannot access each other's internal state, which is a fundamental principle of human semiotic communication.
If the agents' brains were connected, the shared variable $w_d$ would be a representation of the connected brain and could be inferred by referencing $x^{1:N}_d$. But this is not the case in real-world communication. The challenge is to infer the shared variable $w_d$ without connecting the agents' brains and without simultaneously referencing $x^{1:N}_d$. The solution is to play the RMHNG.

\begin{figure}[tb!p]
    \centering
    \includegraphics[width=1.0\linewidth]{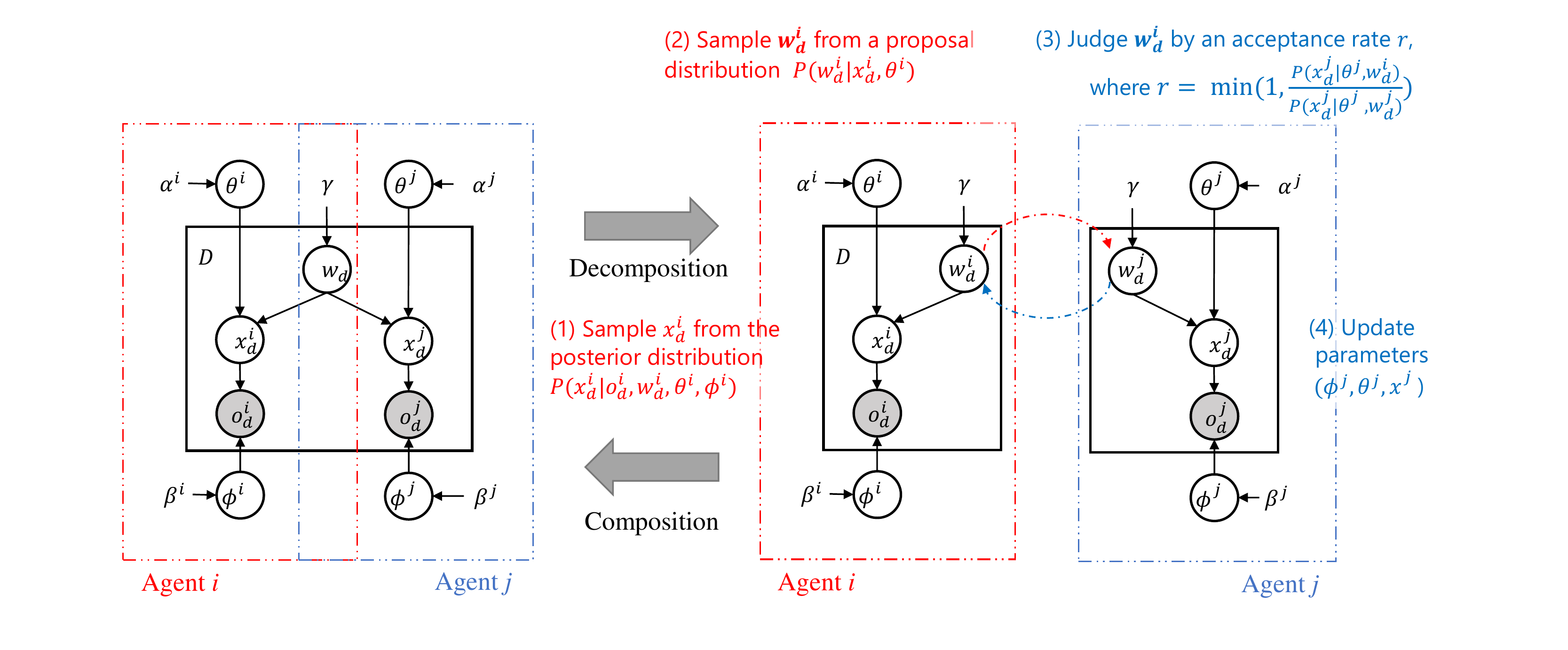}
    \caption{Decomposition and composition of two-agent Inter-PGM. Notes (1) -- (4) describe the MH communication(Algorithm~\ref{alg:MH-C}), which is an elemental step of MHNG. Similarly, $N$-agent multi-agent Inter-PGM can be decomposed into $N$ PGMs representing $N$ agents.}
    \label{fig:decomposition}
\end{figure}


The decomposition of the generative model inspired by SERKET, as shown in Figure~\ref{fig:decomposition} right, allows for a more manageable and systematic approach to the inference of hidden variables. The SERKET framework enables the decomposition of a PGM into multiple modules, which simplifies the overall inference process by breaking it down into inter-module communication and intra-module inference~\citep{taniguchi2020neuro,nakamura2018serket}.
In the context of the RMHNG, the semiotic communication between agents is analogous to the inter-module communication in the SERKET framework.

\subsubsection{MH receiving}
Algorithm~\ref{alg:MH-P} presents the MH-receiving algorithm. When a listener agent $A^{Li} \in \mathbb{A}$ receives a sign $w^\star$ for the $d$-th object, the agent evaluates whether to accept the sign and update $A^{Li}.w_d$ or not, where $\mathbb{A}$ is a set of agents. Here, $A^{i}.w_d$ represents the $w_d$ that agent $A^{i}$ possesses. Similarly, $A^{Li}.x_d$ denotes the $x_d$ held by agent $A^{i}$. For $A^n \in \mathbb{A}$, $A^n$ is an instance of a struct (or a class), and $A^n.\bullet$ indicates the variable $\bullet$ of the $n$-th agent, i.e., $(A^n.\theta, A^n.\phi, (A^n.o_d)_{d \in \mathbb{D}}, (A^n.x_d)_{d \in \mathbb{D}}, (A^n.w_d)_{d \in \mathbb{D}})  =(\theta^n, \phi^n, (o_d^n)_{d \in \mathbb{D}}, (x_d^n)_{d \in \mathbb{D}}, (w_d^n)_{d \in \mathbb{D}})$.
The function {\textbf MH-receiving} returns the sign for the $d$-th object agent $Li$ holds after receiving a new name for the $d$-th object from another agent. 

\begin{algorithm}[h]
\caption{MH Receiving}
\label{alg:MH-P}
\begin{algorithmic}[1]
\Function{MH-RECEIVING}{$w^\star,A^{Li}, d$}
\State $r = {\rm min}\left(1,
    \frac{
    P(A^{Li}.x_{d}\mid A^{Li}.\theta,w^\star)}
    {
    P(A^{Li}.x_{d}\mid A^{Li}.\theta,A^{Li}.s_{d})         
    }
    \right)$
\State $u \sim {\rm Unif}(0,1)$
\If {$u\leq r$}
\State {\bf return} $w^\star$
\Else
\State {\bf return} $A^{Li}.w_d$
\EndIf
\EndFunction
\end{algorithmic} 
\end{algorithm}

\subsubsection{MH communication}
Algorithm~\ref{alg:MH-C} presents the MH-communication algorithm. The function \textbf{MH-communication} describes the elementary communication in both the MHNG and the RMHNG. A sign $s$ for the $d$-th object is sampled (i.e., uttered) by agent $Sp$ and received by agent $Li$, where $Li, Sp \in \mathbb{N}$.

\begin{algorithm}[h]
\caption{MH Communication}
\label{alg:MH-C}
\begin{algorithmic}[1]
\Function{MH-communication}{$A^{Sp},A^{Li},d$}
\State $w^\star \sim P(A^{Sp}.w_{d}\mid A^{Sp}.x_{d},A^{Sp}.\theta)$
\State {\bf return} \textrm{MH-receiving}$(w^\star, A^{Li}, d)$
\EndFunction
\end{algorithmic} 
\end{algorithm}

\subsubsection{Recursive MH communication}
Algorithm~\ref{alg:MH-PB} presents the recursive MH communication algorithm. This algorithm represents the recursive MH communication process, as shown in Figure~\ref{fig:rmhc}. The recursive MH communication is one of the MH sampling procedures for $p(w_d\mid o^{1:N}_d)$. Given $n+1$ ($n<N$) agents, each with parameter $w_d$, this algorithm is used to compute $w_d$ for interactions among $n$ agents.
If $n>1$, the RMH-communication function is recursively called for agents $A^{1:n-1} \subset \mathbb{A}$ to compute interactions among them. Then, $A^{n+1}$ updates its own parameter $w_d$ using the received information $\bar{s}$ by calling the MH-receiving function. 
If $n=1$, the MH-communication function is called.
After the internal loop (from line 2 to line 9) is completed, the algorithm returns the $w_d$ of a randomly selected agent $j$ from $A^{1:n+1}$. This algorithm can recursively calculate interactions among $N$ agents.

\begin{algorithm}[h]
\caption{Recursive Metropolis-Hastings Communication}
\label{alg:MH-PB}
\begin{algorithmic}[1]
\Function{RMH-communication}{$A^{1:n}, A^{n+1}, d$}
\For{$t=1$ to $T$} 
\Comment{Internal iteration} 
\If {$n>1$}
\State $\bar{w} \leftarrow$ RMH-communication($A^{1:n-1}, A^{n}$, d)
\State $A^{n+1}.w_d \leftarrow \textrm{MH-receiving}(A^{n+1}, \bar{w}, d)$
\Else
\State $A^{2}.w_d\leftarrow\textrm{MH-communication}(A^1,A^2,d)$
\EndIf
\EndFor
\State $j \leftarrow $rand($\{1:n+1\}$) 
\State \Return $A^j.w_d$
\EndFunction
\end{algorithmic} 
\end{algorithm}

\begin{figure}[tb!p]
    \centering
    \includegraphics[width=1.0\linewidth]{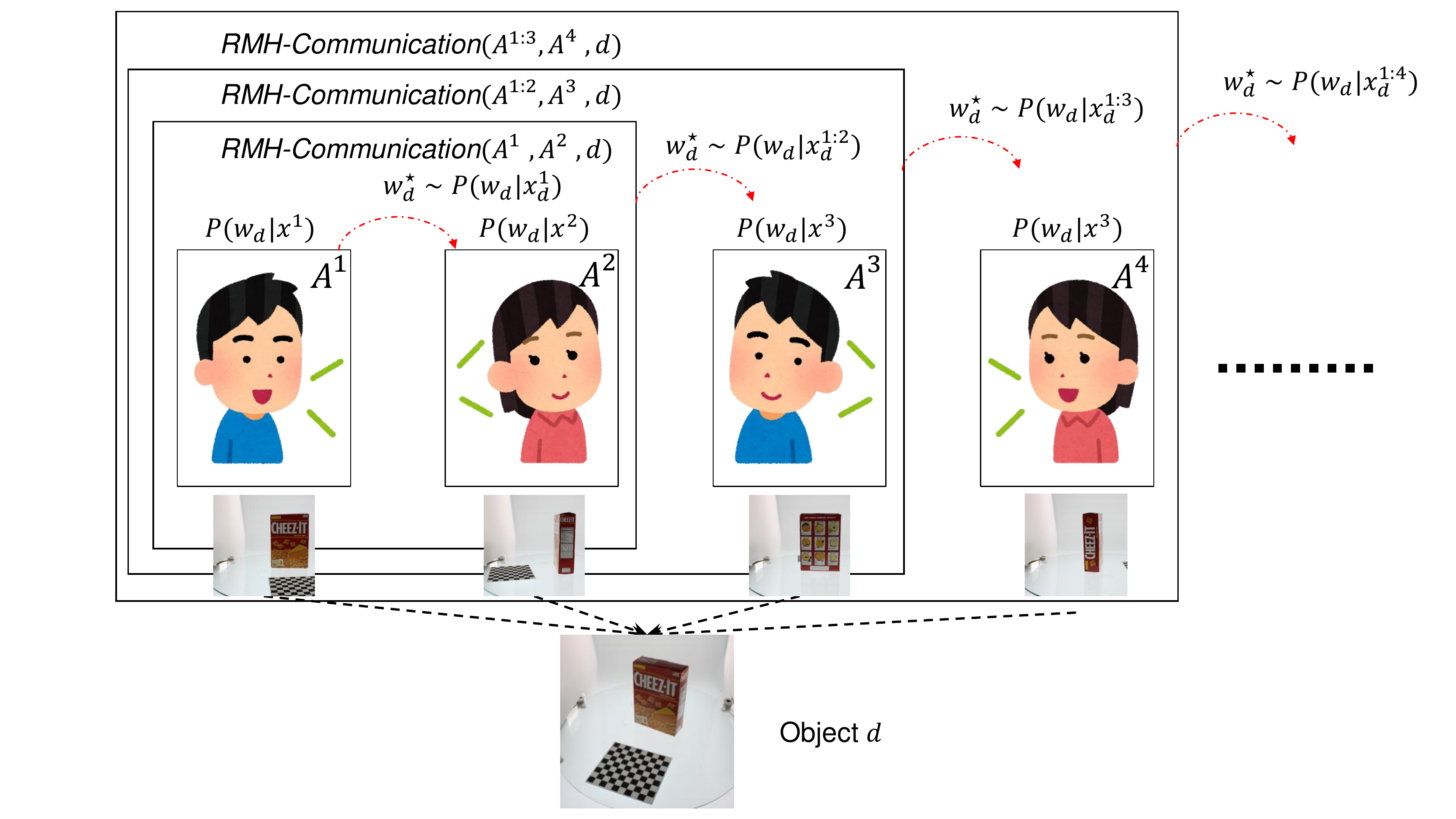}
    \caption{The upper figure is schematic explanation of RMH communication and RMHNG. The recursive MH communication is one of the MH sampling procedures for $p(w_d\mid o^{1:N}_d)$. Given $n+1$ ($n<N$) agents, each with parameter $w_d$, this algorithm is used to compute $w_d$ for interactions among $n$ agents.}
    \label{fig:rmhc}
\end{figure}

\begin{figure}[tb!p]
    \centering
    \includegraphics[width=1.0\linewidth]{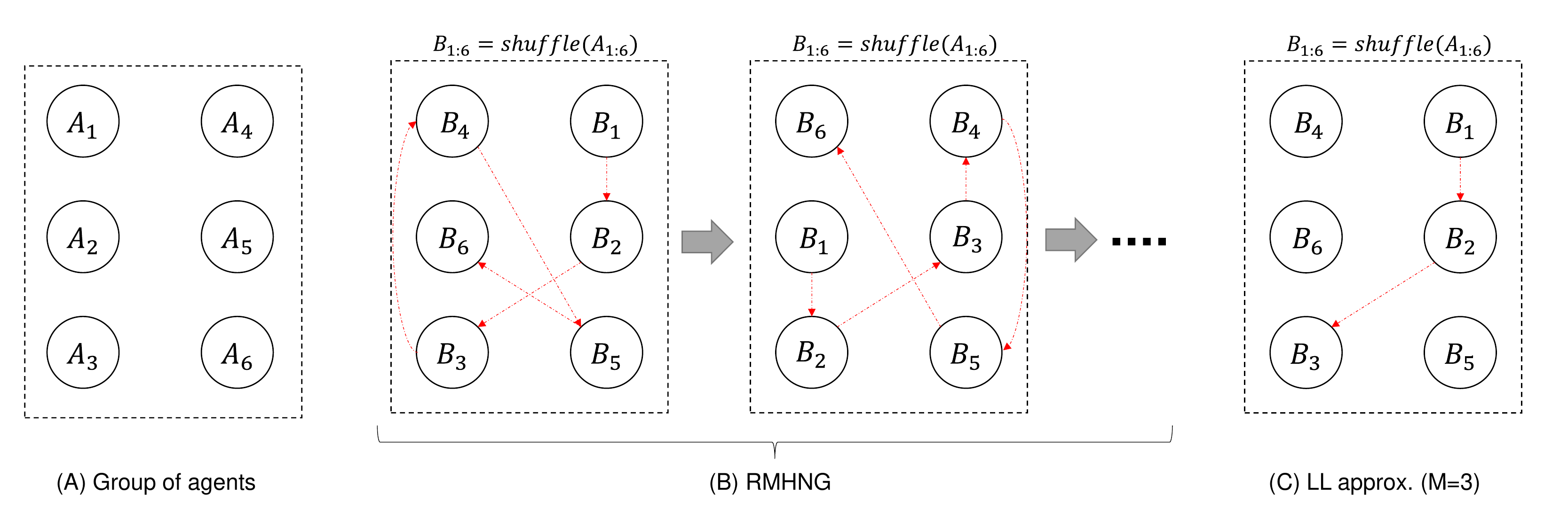}
    \caption{Schematic explanation of flow of RMHNG (see Algorithm~\ref{alg:RMHNG}) and limited-length approximation (in case where $M=3$) (see Section~\ref{sec:approximations}).}
    \label{fig:chain}
\end{figure}

\subsubsection{Recursive MH naming game}
Algorithm~\ref{alg:RMHNG} presents the recursive MH naming game algorithm. The agents repeatedly engage in recursive MH communication for each object, shuffling the order of the agents. The recursive MH communication is mathematically a type of approximate MH sampling procedure for $p(w_d\mid o^{1:N}_d)$.
After the recursive MH communication is performed for every object, each agent internally updates its global parameter $\theta^n$. 
By iterating this block $I$ times, the agents can sample $\{w_d\}, \{\theta_n\}$ from the posterior distribution over $p(\{w_d\}, \{\theta_n\}\mid \{x^n_d\}) $.

\begin{algorithm}[h]
\caption{Recursive Metropolis-Hastings naming game}
\label{alg:RMHNG}
\begin{algorithmic}[1]
\State Explanation of each parameter
\State $A^{1:N}= (A^1,A^2,\ldots, A^N)$ 
\State {Initialize all parameters}
\For{$i=1$ to $I$} 
\Comment{Number of iteration}
\State  $B^{1:N}=\mathrm{shuffle(A^{1:N})}$ \Comment{Randomization of the order of the agents}
\For{$d=1$ to $D$} 
\Comment{Naming every object}
\State RMH-communication($B^{1:M-1}, B^{M}, d$)
\Comment{$M=N$ : No approximation}
\State 
\Comment{$M<N$ : Limited-length approximation (see \ref{sec:approximations})}
\EndFor
\For{$n=1$ to $N$} 
\Comment{Parameter update}
\State $B^{n}.\theta \sim P(B^{n}.\theta\mid B^{n}.s, B^{n}.x)$
\EndFor
\EndFor
\end{algorithmic} 
\end{algorithm}

\subsection{Theory and proof}

For the main theoretical result, we use the following corollary.
\begin{corollary} The MH communication is a Metropolis-Hastings sampler of $P(w_d\mid x_d^{Sp}, x_d^{Li}, \theta^{Sp}, \theta^{Li})$. \end{corollary}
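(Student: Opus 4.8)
The plan is to verify directly that the single exchange described by \textbf{MH-communication} — sample a proposal $w^\star \sim P(w_d \mid x_d^{Sp}, \theta^{Sp})$ from the speaker, then accept it with probability $r = \min\!\left(1, \frac{P(x_d^{Li}\mid \theta^{Li}, w^\star)}{P(x_d^{Li}\mid \theta^{Li}, w_d^{Li})}\right)$ at the listener — is exactly the Metropolis-Hastings transition kernel for the target distribution $\pi(w_d) \propto P(w_d \mid x_d^{Sp}, x_d^{Li}, \theta^{Sp}, \theta^{Li})$, so that standard MH theory gives the claim. Concretely, I would first use the conditional independence encoded in the two-agent Inter-PGM (Figure~\ref{fig:pgms}(A)): given $w_d$, the two agents' perceptual states are independent, so $P(x_d^{Li}\mid w_d, \theta^{Li}, x_d^{Sp}, \theta^{Sp}) = P(x_d^{Li}\mid w_d, \theta^{Li})$, and hence by Bayes' rule the target factorizes as $\pi(w_d) \propto P(x_d^{Li}\mid w_d, \theta^{Li})\, P(w_d \mid x_d^{Sp}, \theta^{Sp})$.

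Next I would identify the proposal distribution as $q(w^\star \mid w_d) = P(w^\star \mid x_d^{Sp}, \theta^{Sp})$, which is independent of the current state $w_d$ — i.e. this is an \emph{independence sampler}. I would then write out the generic MH acceptance ratio for an independence sampler,
\begin{equation*}
\alpha = \min\!\left(1,\ \frac{\pi(w^\star)\, q(w_d \mid w^\star)}{\pi(w_d)\, q(w^\star \mid w_d)}\right)
= \min\!\left(1,\ \frac{\pi(w^\star)\, P(w_d \mid x_d^{Sp},\theta^{Sp})}{\pi(w_d)\, P(w^\star \mid x_d^{Sp},\theta^{Sp})}\right),
\end{equation*}
and substitute the factorization of $\pi$ from the previous step. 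The $P(\cdot \mid x_d^{Sp}, \theta^{Sp})$ factors cancel between numerator and denominator — this cancellation is the whole point, since it is what makes the ratio computable by the listener alone — leaving $\alpha = \min\!\left(1, \frac{P(x_d^{Li}\mid w^\star, \theta^{Li})}{P(x_d^{Li}\mid w_d^{Li}, \theta^{Li})}\right)$, which is precisely the $r$ used in \textbf{MH-receiving}. Finally I would invoke the standard fact that an MH chain with this proposal and acceptance probability has $\pi$ as its stationary distribution (detailed balance), concluding that one call to \textbf{MH-communication} realizes one step of an MH sampler for $P(w_d \mid x_d^{Sp}, x_d^{Li}, \theta^{Sp}, \theta^{Li})$; since this is essentially the two-agent MHNG result of \citet{taniguchi2023emergent}, I would cite it rather than re-derive detailed balance in full.

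The only real subtlety — and the step I expect to need the most care — is the bookkeeping around which variables are being conditioned on and which are fixed parameters: one must be explicit that $x_d^{Sp}, \theta^{Sp}, x_d^{Li}, \theta^{Li}$ are all held fixed during the exchange (so $w_d$ is the only state variable), and that the conditional-independence structure of the Inter-PGM is exactly what licenses dropping $x_d^{Sp}, \theta^{Sp}$ from $P(x_d^{Li}\mid \cdots)$. A minor point worth flagging is the notational inconsistency in Algorithm~\ref{alg:MH-P} (the denominator reads $A^{Li}.s_d$ where it should be $A^{Li}.w_d$); the proof should silently use $w_d$. Everything else is routine substitution, so the proof will be short.
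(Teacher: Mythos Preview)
Your proposal is correct and follows essentially the same approach as the paper: identify $P(w_d\mid x_d^{Sp},\theta^{Sp})$ as the (independence) proposal, observe that the MH acceptance ratio for the target $P(w_d\mid x_d^{Sp},x_d^{Li},\theta^{Sp},\theta^{Li})$ then reduces to the listener-only ratio $r$ in \textbf{MH-receiving}, and defer to \citet{taniguchi2023emergent} for the original derivation. In fact the paper's treatment is terser than yours---it simply asserts the equivalence of $r$ with the MH acceptance probability and cites \citet{hagiwara2019symbol,YoshinobuHagiwara2022,taniguchi2023emergent} without spelling out the factorization and cancellation that you (correctly) make explicit.
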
 
The acceptance probability $r$ in {\bf MH-receiving} is equivalent to that in the MH algorithm for $P(w_d\mid x_d^{Sp}, x_d^{Li}, \theta^{Sp}, \theta^{Li})$ in the case that $P(w\mid x^{Sp}, \theta^{Sp})$ is a proposal distribution. This result is a generalization of \citep{hagiwara2019symbol,YoshinobuHagiwara2022} and a special case of \citep{taniguchi2023emergent}. 
For the details of the proof, please refer to the original papers. 
   
The first theoretical result is as follows.
\begin{thm} The RMH communication converges to a MCMC sampler of $P(w_d\mid x_d^{1:n},\theta^{1:n})$ when $T \rightarrow \infty$. \end{thm}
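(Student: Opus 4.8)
The plan is to argue by induction on the number $m$ of agents participating in a single call to RMH-communication, exploiting the recursive structure of Algorithm~\ref{alg:MH-PB} (throughout, $x_d$ is treated as fixed/observable and the $\theta$'s as fixed during a communication episode, so all the posteriors below are well-defined targets). For the base case $m=2$, RMH-communication collapses to repeatedly invoking MH-communication on the same pair of stateful agents; by the Corollary this is a Metropolis--Hastings chain whose stationary distribution is $P(w_d\mid x_d^{1:2},\theta^{1:2})$, so its law converges to that posterior as $T\to\infty$ by standard ergodicity of the MH chain. This anchors the induction.

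For the inductive step, assume that RMH-communication among any $m-1$ agents is, in the limit of infinitely many internal iterations, an exact sampler of $\pi_{m-1}(w_d):=P(w_d\mid x_d^{1:m-1},\theta^{1:m-1})$. In the $m$-agent call, each internal iteration (i) draws $\bar w$ from the recursive $(m-1)$-agent call --- in the limit a draw from $\pi_{m-1}$ that does not depend on agent $m$'s current value, i.e. an \emph{independence proposal} --- and (ii) updates $A^m.w_d$ via MH-receiving with acceptance $r=\min\!\bigl(1,\ P(x_d^m\mid\theta^m,\bar w)/P(x_d^m\mid\theta^m,A^m.w_d)\bigr)$. The crucial identity is that, by Bayes' rule together with the conditional independence of the percepts given $w_d$ encoded in the multi-agent Inter-PGM (Figure~\ref{fig:pgms}),
$$\pi_m(w_d):=P(w_d\mid x_d^{1:m},\theta^{1:m})\ \propto\ P(x_d^m\mid w_d,\theta^m)\,\pi_{m-1}(w_d).$$
Hence the Metropolis--Hastings acceptance ratio for target $\pi_m$ with independence proposal $q=\pi_{m-1}$ is $\min\!\bigl(1,\ \pi_m(w^\star)q(w)/(\pi_m(w)q(w^\star))\bigr)=\min\!\bigl(1,\ P(x_d^m\mid w^\star,\theta^m)/P(x_d^m\mid w,\theta^m)\bigr)$, which is exactly the $r$ computed in MH-receiving. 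Therefore the outer loop over $t$ is an MH chain with stationary distribution $\pi_m$; since $\pi_{m-1}$ dominates $\pi_m$ on its support, the chain is $\pi_m$-irreducible and aperiodic, so the law of $A^m.w_d$ converges to $\pi_m$ as $T\to\infty$. Combined with the base case, this establishes the claim for every $m$ (equivalently, for the $n$-agent RMH communication targeting $P(w_d\mid x_d^{1:n},\theta^{1:n})$). The final random choice of the returned agent $j$ is harmless: once all the chains have mixed the relevant marginal is $\pi_m$ irrespective of which agent's copy is reported (or one may simply read off $A^m.w_d$).

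I expect the main obstacle to be the interchange of the nested limits and the fact that the recursive calls do not return i.i.d.\ draws from $\pi_{m-1}$: the proposal is only $\pi_{m-1}$-distributed in the limit of the inner iteration count, and because agents are stateful, consecutive recursive calls yield correlated proposals. A fully rigorous argument must (a) let the internal iteration count tend to infinity at every level of the recursion simultaneously and control the accumulated approximation error --- e.g.\ by propagating uniform or geometric ergodicity bounds up the recursion --- and (b) justify that correlation between successive proposals does not destroy convergence, for instance by viewing the entire tuple of agents' variables as a single Markov chain on the product state space and showing its marginal on $A^m.w_d$ has the claimed limit. By contrast, the algebraic identification of the MH-receiving ratio with the MH ratio for $\pi_m$, which is the conceptual heart of the result, is immediate from the Corollary and the factorization of the Inter-PGM.
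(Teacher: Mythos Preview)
Your proof is correct and follows essentially the same inductive strategy as the paper: the base case $m=2$ is handled via the Corollary, and the inductive step treats the $(m-1)$-agent recursive call as the proposal in an MH chain targeting $\pi_m$. Your version is in fact more detailed than the paper's --- you explicitly verify that the MH-receiving ratio coincides with the MH acceptance ratio for target $\pi_m$ with independence proposal $\pi_{m-1}$ via the factorization $\pi_m\propto P(x_d^m\mid w_d,\theta^m)\,\pi_{m-1}$, and you candidly flag the nested-limit and correlated-proposal caveats that the paper's proof simply elides.
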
 \begin{proof} When $n=2$, the RMH communication is reduced to the execution of MH communication $T$ times. The MH communication is proven to be an MH sampler in corollary 1. Therefore, RMH communication is a MCMC sampler, and the sample distribution converges to $P(w_d\mid x_d^{1:2},\theta^{1:2})$ when $T \rightarrow \infty$.
When $n>2$, if the $\mathrm{RMH-communication}(B^{1:n-1}, B^{n}, d)$ is a sampler for $P(w_d\mid x_d^{1:n},\theta^{1:n})$, $RMH-communication(B^{1:n}, B^{n+1}, d)$ becomes an MH sampler for $P(w_d\mid x_d^{1:n+1},\theta^{1:n+1})$. Therefore, RMH communication is a MCMC sampler, and the sample distribution converges to 
$P(w_d\mid x_d^{1:n+1},\theta^{1:N+1})$ when $T \rightarrow \infty$.
Therefore, the RMH communication converges to a MCMC sampler of $P(w_d\mid x_d^{1:n},\theta^{1:n})$ when $T \rightarrow \infty$ by mathematical induction.
\end{proof}

\begin{thm} The RMHNG converges to a MCMC sampler of $P(w_d, \theta^{1:n}\mid x_d^{1:n})$ when $T \rightarrow \infty$. \end{thm}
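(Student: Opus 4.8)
The plan is to combine Theorem 2 (which handles the inner "communication" loop over the word variables $w_d$) with the parameter-update step in Algorithm~\ref{alg:RMHNG}, and to recognize the whole RMHNG as a Gibbs-style Metropolis-within-Gibbs sampler on the joint target $P(w_{1:D}, \theta^{1:n}\mid x_d^{1:n})$. First I would set up the joint model cleanly: the iteration index $i$ in Algorithm~\ref{alg:RMHNG} alternates between (a) for each object $d$, a call to RMH-communication, which by Theorem 2 is (in the $T\to\infty$ limit) a transition kernel that leaves $P(w_d\mid x_d^{1:n},\theta^{1:n})$ invariant, and (b) for each agent $n$, an exact Gibbs draw $\theta^n \sim P(\theta^n\mid x^n, w^n)$ from its full conditional. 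The key observation is that step (a), applied object-by-object with the $\theta$'s held fixed, is a valid update for the block $w_{1:D}$ because the $w_d$ are conditionally independent across $d$ given $\theta^{1:n}$ and $x^{1:n}_{1:D}$, so sweeping over $d$ leaves $\prod_d P(w_d\mid x_d^{1:n},\theta^{1:n}) = P(w_{1:D}\mid x_{1:D}^{1:n},\theta^{1:n})$ invariant; step (b) is an ordinary Gibbs update of the $\theta$ block.

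Next I would invoke the standard fact that a deterministic or random scan that alternates (i) a kernel leaving the conditional $P(w_{1:D}\mid x^{1:n},\theta^{1:n})$ invariant and (ii) a kernel leaving the conditional $P(\theta^{1:n}\mid x^{1:n}, w_{1:D})$ invariant yields a Markov chain whose stationary distribution is the joint $P(w_{1:D},\theta^{1:n}\mid x^{1:n})$ — this is the Metropolis-within-Gibbs / composition-of-kernels argument (each sub-kernel has the correct invariant measure by construction, and composition preserves the common invariant measure). With the shuffling of agent order in line~5 playing only the role of a symmetrization that does not affect the invariant distribution, the chain over the pair $(w_{1:D},\theta^{1:n})$ produced by the outer loop therefore has $P(w_{1:D},\theta^{1:n}\mid x^{1:n})$ as stationary distribution, and under the usual irreducibility/aperiodicity provisos it converges to it as $T\to\infty$ (and $I\to\infty$). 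This is exactly the claimed statement (stated for a representative $w_d$, which is legitimate since the $w_d$-sweep factorizes).

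The main obstacle I expect is the interchange of limits and the precise sense of "converges": Theorem 2 only guarantees that RMH-communication is a sampler of the conditional \emph{in the limit} $T\to\infty$, so for finite $T$ the inner kernel does not exactly preserve $P(w_d\mid x_d^{1:n},\theta^{1:n})$, and one is really running Metropolis-within-Gibbs with a finite number of inner MH steps. The honest argument is that even a \emph{single} inner transition that leaves the conditional invariant suffices for the Metropolis-within-Gibbs chain to have the correct joint stationary law — so I would actually want to strengthen the supporting claim to "each RMH-communication call leaves $P(w_d\mid x_d^{1:n},\theta^{1:n})$ invariant for every $T\ge 1$" (which follows because each elementary MH-receiving step is a $P(\cdot\mid x_d^{1:n},\theta^{1:n})$-reversible kernel, and composition of invariant kernels is invariant), rather than relying on the $T\to\infty$ statement of Theorem 2. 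Getting this subtlety right — i.e., basing the proof on invariance of the inner kernel rather than on its convergence — is the crux; the remaining irreducibility and aperiodicity conditions are routine and can be assumed to hold for the models considered, as is standard in the MCMC literature and in \citet{taniguchi2023emergent}.
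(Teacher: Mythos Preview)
Your main line of argument matches the paper's almost exactly: the paper's proof simply says that RMH communication samples $w_d$ from its full conditional (by the preceding theorem, as $T\to\infty$), that the $\theta^{1:n}$ update is a Gibbs draw from its full conditional, and hence RMHNG converges to a Gibbs sampler of $P(w_d,\theta^{1:n}\mid x_d^{1:n})$. Your version is more carefully spelled out (the conditional independence of the $w_d$'s across $d$, the Metropolis-within-Gibbs framing, the irrelevance of the agent shuffle to the stationary law), but the skeleton is identical.

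However, the refinement you propose in your final paragraph --- replacing the $T\to\infty$ convergence statement by the claim that each RMH-communication call leaves $P(w_d\mid x_d^{1:n},\theta^{1:n})$ invariant for every $T\ge 1$ --- does not go through, and the justification you offer for it fails. The elementary MH-receiving step at an intermediate agent $k<n$ uses the acceptance ratio $P(x_d^k\mid\theta^k,w^\star)/P(x_d^k\mid\theta^k,w_d)$; this matches the Metropolis--Hastings ratio for the \emph{$k$-agent} posterior when the proposal is (exactly) the $(k{-}1)$-agent posterior, but it does \emph{not} match the MH ratio for the full $n$-agent target, because the likelihood factors $P(x_d^j\mid\theta^j,\cdot)$ for $j>k$ are absent. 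Hence the inner steps are not $P(\cdot\mid x_d^{1:n},\theta^{1:n})$-reversible, and their composition need not preserve the full posterior for finite $T$. This is precisely why the recursive scheme requires $T\to\infty$: only in that limit does each inner call deliver a draw from the appropriate sub-posterior, making the next outer MH-receiving step a valid MH move for the next-larger posterior --- which is exactly the inductive mechanism in the paper's proof of the preceding theorem. So the paper's (terse) reliance on $T\to\infty$ is not a removable technicality but the load-bearing part of the argument; your proposed strengthening would need a genuinely different device.
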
 \begin{proof} The RMHNG samples the local parameters $w_d$ for all $d$ using the RMH communication, and the global parameters $\theta^{1:n}$ from $P(\theta^{1:n}\mid \{x_d^{1:n}\}_{d \in \mathbb{D}} , \{w_d\}_{d \in \mathbb{D}})$. 
 When $T \rightarrow \infty$, RMH communication converges to a sampler of $P(w_d\mid x_d^{1:n},\theta^{1:n})$. As a result, the RMHNG converges to a Gibbs sampler of $P(w_d, \theta^{1:n}\mid x_d^{1:n})$. 
\end{proof}
As a result, the RMHNG is proved to be a decentralized approximate Bayesian inference procedure for $p(\{w_d\}_{d \in \mathbb{D}}, \{\theta_n\}_{n \in \{1, \ldots, N\}}\mid \{x^n_d\}_{d \in \mathbb{D}})$.

\subsection{Approximations}\label{sec:approximations}
Though the RMHNG is guaranteed to be a decentralized approximate Bayesian inference procedure for $p(\{w_d\}_{d \in \mathbb{D}}, \{\theta_n\}_{n \in \{1, \ldots, N\}}\mid \{x^n_d\}_{d \in \mathbb{D}})$
, the computational cost increases exponentially with respect to the number of agents $N$. The computational cost is $O(IDT^{(N-1)})$. This indicates that the computational cost of \textbf{RMH-communication}, i.e., $O(T^{(N-1)})$, has a significant impact on the overall computational cost. Therefore, we introduce a lazy version of RMHNG, which employs two approximations to reduce the computational cost.

\subsubsection{One-sample (OS) approximation}
The number of internal iterations $T$ corresponds to the iterations of MCMC for sampling $w_d$ given variables of a (sub)group of agents. Theoretically, $T$ should be large. However, practically, even $T=1$ can work in an approximate manner. We refer to the RMHNG with $T=1$ as the OS approximation (\textbf{OS}), a special case. With the OS, the computational cost of RMH communication is significantly reduced from $O(T^{(N-1)})$ to $O(N)$.

\subsubsection{Limited-length (LL) approximation }
RMH communication is a process of information propagation through a chain connecting $N$ agents (as shown in Figure~\ref{fig:chain}). Limited-length approximation (\textbf{LL}) truncates the chain to $M$ agents. By shuffling the order of the agents according to the data points, it is expected that sufficient information will be statistically propagated among all the agents. \textbf{LL} reduces the computational cost of RMH communication from $O(T^{(N-1)})$ to $O(T^{(M-1)})$, where $M \le N$ is the length of the truncated chain, i.e., the number of agents participating in an RMH communication.
To reduce computational complexity while maintaining applicability for explaining communication in human society, two types of approximations are proposed: (1) OS approximation and (2) LL approximation. Both types were validated through experimentation.

\subsection{Example: multi-agent Inter-GMM}\label{sec:inter-GMM}
To evaluate the RMHNG, we developed a computational model of symbol emergence called multi-agent Inter-GMM. This is based on the Gaussian mixture model (GMM) and is a special case of the multi-agent Inter-PGM. \citet{hagiwara2019symbol,YoshinobuHagiwara2022} proposed the Inter-Dirichlet mixture (Inter-DM) which combines two Dirichlet mixtures (DMs), $p(x^n_d\mid w_d)$ and $p(o^n_d\mid x^n_d)$, represented as categorical distributions in Figure~\ref{fig:pgms} (A). \cite{taniguchi2023emergent} proposed Inter-GMM+VAE which combines two GMM+VAEs, i.e., $p(x^n_d\mid w_d)$ and $p(o^n_d\mid x^n_d)$ represented as a categorical distribution as a part of GMM and a VAE respectively. Inter-GMM is defined as a part of Inter-GMM+VAE and combines two GMMs via a shared latent variable. We generalized the two-agent Inter-GMM and obtained the multi-agent Inter-GMM, which has $N$ Gaussian emission distributions corresponding to $N$ agents.
The probabilistic generative process of the multi-agent inter-GMM is as follows: \begin{align} w_d &\sim {\rm Cat}(\gamma) \quad &d = 1, \ldots, D\label{eq:inter-GMM+VAE-1}\\
\mu_k^n, \Lambda^n_k &\sim \mathcal{N}(\mu_k^n\mid m,(\bar{\alpha}\Lambda^n_k)^{-1}){\mathcal W}(\Lambda_k^n\mid \nu,\bar{\beta})\quad &k=1, \ldots, K\\
\alpha^n &= (m, \bar{\alpha},\nu, \bar{\beta}) \\
\theta^n &= (\mu_{1:K}^n, \Lambda^n_{1:K})\\
x^n_d &\sim \mathcal{N}(x^n_d\mid \mu^n_{w_d},(\Lambda^n_{w_d})^{-1})\quad &d = 1, \ldots, D\label{eq:inter-GMM+VAE-3}
\end{align} where $\mu_k^n$ and $\Lambda^n_k$ are the mean vector and the precision matrix of the $k$-th Gaussian distribution of the $n$-th agent.
${\rm Cat}(*)$ is the categorical distribution, $\mathcal{N}(*)$ is the Gaussian distribution, ${\mathcal W}(*)$ is the Wishart distribution.
The Inter-GMM is a probabilistic generative model represented by the PGM shown in Figure~\ref{fig:pgms} (C). In other words, the multi-agent Inter-GMM is an instance of the multi-agent Inter-PGM.
Therefore, the RMHNG can be directly applied to the multi-agent Inter-GMM. 

\section{Experiment 1: Synthetic data}\label{sec:syn}
\subsection{Conditions}
We evaluated the RMHNG using the multi-agent Inter-GMM with four agents ($N=4$) using synthetic data. For all experiments (excluding the measurement of computation time), the number of iterations ($I$) was set to $100$, and each experiment was conducted five times. 

{\bf Dataset}:  We created synthetic data to serve as observations for the four agents. A dataset was generated from five $4$-dimensional Gaussian distributions with mean vectors of $(0, 1, 2, 3)$, $(0, 5, 6, 7)$, $(8, 5, 10, 11)$, $(12, 13, 10, 15)$, and $(16, 17, 18, 15)$, respectively. The variance of each Gaussian distribution was set to the identity matrix $\mathbf{I}$. The values obtained for each dimension were taken as observations for each agent. In other words, the value of the $n$-th dimension of data sampled from the GMM was considered as the observation for the $n$-th agent. Notably, for the $n$-th agent, the $n$-th and $n+1$-th Gaussian distributions have the same mean and variance. Therefore, the $n$-th agent cannot differentiate the $n$-th and $n+1$-th Gaussian distributions without communication.

{\bf Compared methods}: We assessed the proposed model, {\it RMHNG} (proposal), by comparing it with two baseline models and a topline model. In {\it No communication} (baseline 1), two agents independently infer a sign $w$, i.e., perform clustering of the data. No communication occurs between the four agents. In other words, the {\it No communication} model assumes that the agents independently infer signs $w^n_d$ $(n \in \{1, 2, 3, 4\})$, respectively, using four GMMs. {\it All acceptance} (baseline 2) is the same as the RMHNG, with an acceptance ratio always set to $r = 1$ in MH receiving (see Algorithm~\ref{alg:MH-P}). Each agent always believes that the sign of the other is correct. In {\it Gibbs sampling} (topline), the sign $w_d$ is sampled using the Gibbs sampler. This process directly uses $x^{1:4}_d$, although no one can simultaneously examine the internal (i.e., brain) states of human communication. This is a centralized inference procedure and acts as a topline in this experiment.

We also evaluated two approximation methods introduced in Section~\ref{sec:approximations}. {\bf OS} and {\bf LL} refer to the OS and LL approximations, respectively.  
In the LL approximation, $M=2$, i.e., the chain length is one. In {\bf OS\&LL}, both {\bf OS} and {\bf LL} approximations were applied simultaneously.

{\bf Hyperparameters}: In all methods, the hyperparameters of the agents were set to be the same. The hyperparameters were $\beta=1$, $m=0$, $W=0.01$, and $\nu=1$.

{\bf Evaluation criteria}:
\begin{itemize} \item \textbf{Clustering:} We used Adjusted Rand Index (ARI)~\citep{hubert1985comparing} to evaluate the unsupervised categorization performance of each agent in the MH naming game. A high ARI value indicates excellent categorization performance, while a low ARI value indicates poor performance. ARI is advantageous over precision since it accounts for label-switching effects in clustering by comparing the estimated labels and ground-truth labels. Appendix~\ref{apdx:ari} provides more details. 
\item \textbf{Sharing sign:} We assessed the degree to which the two agents shared signs using the $\kappa$ coefficient ($\kappa$)~\citep{cohen1960coefficient}. Appendix~\ref{apdx:ari} provides more details. \item\textbf{Computation time:} We conducted experiments to measure the processing time of the program when running it at $I=10$ by varying the values of $T$ in Algorithm~\ref{alg:MH-PB} and $M$ in Algorithm~\ref{alg:RMHNG}. We conducted experiments with $T=1,2,3,4$ and $M=1,2,3$. The program was run three times in each experiment (30 iterations in total, initialized every 10 iterations), and we calculated the average processing time per iteration (10 iterations). \item\textbf{Decentralized posterior inference:} To investigate whether RMHNG is an approximate Bayesian estimator of the posterior distribution $p(w\mid x^1,x^2,\ldots,x^N,\theta^1,\theta^2,\ldots,\theta^N)$, we need to compare it with the true posterior distribution. However, computing the true posterior distribution $p(w\mid x^1,x^2,\ldots,x^N,\theta^1,\theta^2,\ldots,\theta^N)$ directly is difficult. Therefore, we evaluate how well the distribution of signs obtained by RMHNG matches that of Gibbs sampling. Appendix~\ref{apdx:bipartite} provides more details. 
\end{itemize}

{\bf{Machine Specifications}}: The experiment was conducted on a desktop PC with an Intel(R) Core(TM) i9-9900K CPU @ 3.60GHz 3.60 GHz, 32GB of RAM, and an NVIDIA GeForce RTX 2080 SUPER GPU.
\subsection{Results} {\bf {Categorization and sign sharing}}:
 Table~\ref{tab:result-synthetic} shows the ARI and $\kappa$ for each method used on the artificial data. As shown in Table~\ref{tab:result-synthetic}, the ARI values for RMHNG were consistently close to those of Gibbs sampling, with a maximum difference of only 0.1. This indicates that RMHNG had a similar category classification accuracy as Gibbs sampling.
In this setting, OS performed even better than RMHNG, achieving the highest values for both ARI and $\kappa$. This might be because OS facilitated the mixing process by introducing randomness in sampling.
On the other hand, OS\&LL and LL exhibited relatively low values for both ARI and $\kappa$. Notably, even with approximations, RMHNG had higher agent classification accuracy and sign-sharing rate than both No communication and All acceptance.

\begin{table}[bt!h]
\centering
        \caption{Experimental results for synthetic data: Each method was tested $5$ times, and for each agent, ARI and $\kappa$ were calculated when $I$ was between $91$ and $100$. Mean $\pm$ standard deviation of obtained $50 (5 \times 10)$ARI and $\kappa$ are shown. The highest scores are shown in bold, and the second-highest scores are underlined.}
        \label{tab:result-synthetic}
\scalebox{0.8}{
\begin{tabular}{cccccc}
\hline
                 & ARI         & ARI         & ARI         & ARI         &           \\
condition         & (Agent 1)   & (Agent 2)   & (Agent 3)   & (Agent 4)   & $\kappa$         \\ \hline
RMHNG            & $\underline{0.91\pm0.02}$    & $\underline{0.92\pm0.01}$   & $\underline{0.92\pm0.01}$   & $\underline{0.88\pm0.02}$   & $\underline{0.92\pm0.01}$ \\
OS        & $\mathbf{0.93\pm0.02}$   & $\mathbf{0.94\pm0.01}$   & $\mathbf{0.94\pm0.02}$   & $\mathbf{0.91\pm0.02}$   & $\mathbf{0.94\pm0.01}$ \\
LL        & $0.77\pm0.04$   & $0.8\pm0.03$    & $0.81\pm0.02$   & $0.71\pm0.03$   & $0.77\pm0.03$ \\
OS\&LL   & $0.73\pm0.09$   & $0.76\pm0.06$   & $0.77\pm0.05$   & $0.68\pm0.05$  & $0.73\pm0.07$ \\
No communication & $0.64\pm0.03$  & $0.67\pm0.03$   & $0.65\pm0.01$   & $0.60\pm0.02$   & $-0.02\pm0.14$ \\
All acceptance   & $0.008\pm0.005$ & $0.009\pm0.007$ & $0.009\pm0.007$ & $0.009\pm0.007$ & $0.42\pm0.02$ \\ \hline
Gibbs sampling   & \multicolumn{4}{c}{$0.98\pm0.01$}                        & -         \\ \hline
\end{tabular}
}
\end{table}
{\bf{Change in ARI and $\kappa$ for each iteration}}: Figure~\ref{fig:syn-flow} shows the ARI (right) and $\kappa$ (left) for each iteration ($i$ in Algorithm~\ref{alg:RMHNG}). From the left graph in Figure~\ref{fig:syn-flow}, we can see that RMHNG, OS, and LL converge faster in terms of ARI, in that order, among the RMHNG and its approximation methods. OS\&LL show an upward trend in ARI even at the 100th iteration, indicating that they have not converged. No communication has the fastest convergence in ARI among all the methods. As for All acceptance, we can see that the ARI does not show an upward trend even as the iteration count increases, compared to other methods.
From the right graph in Figure~\ref{fig:syn-flow}, we can see that RMHNG, OS, and LL converge faster in terms of $\kappa$, in that order, among the RMHNG and its approximation methods. OS\&LL show an upward trend in $\kappa$ even at the 100th iteration, indicating that they have not converged. No communication and All acceptance do not show an upward trend in $\kappa$ even as the iteration count increases, compared to other methods.

\begin{figure}[tb!p]
\centering
  \begin{minipage}[b]{0.45\columnwidth}
    \centering
    \includegraphics[width=\columnwidth]{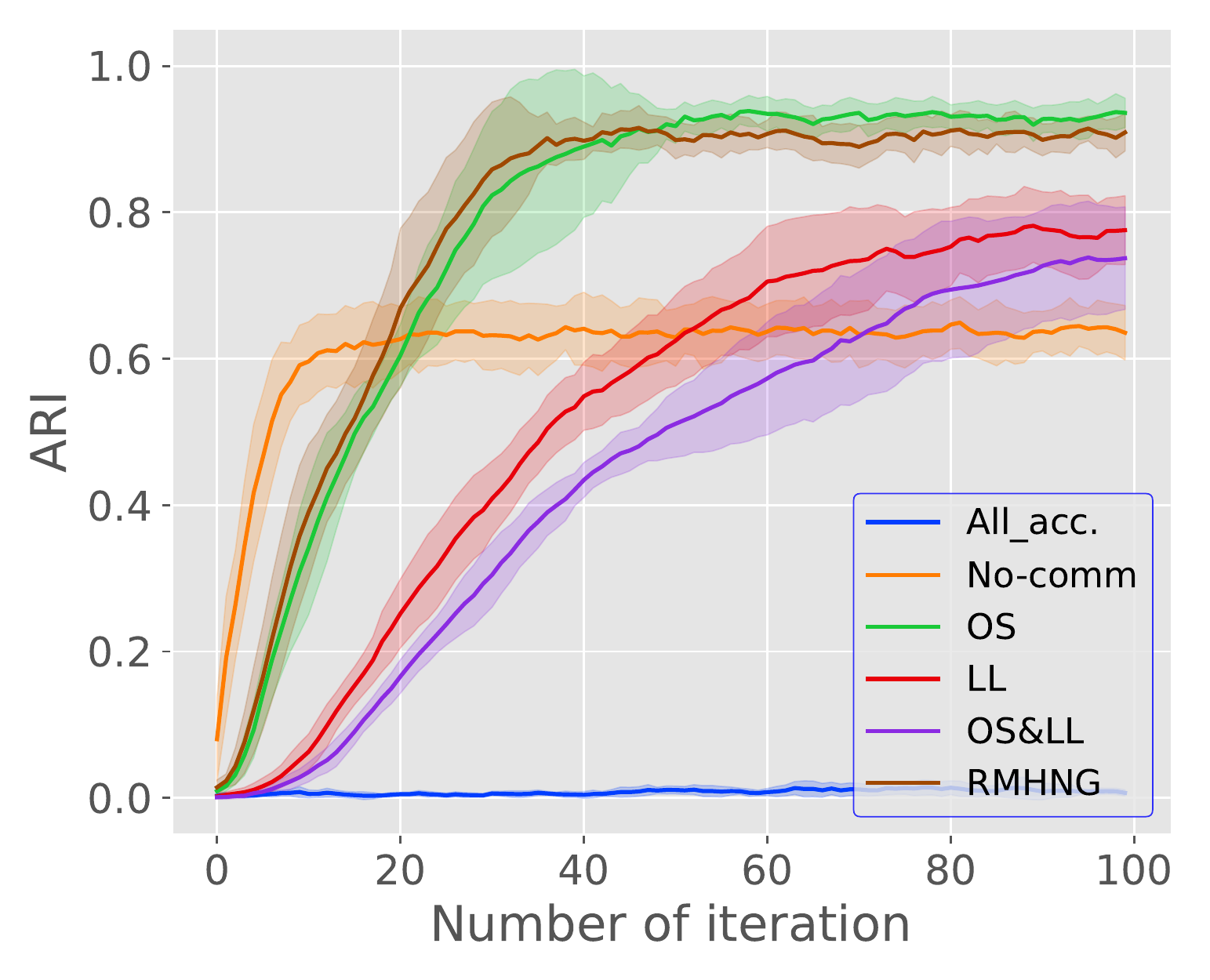}
  \end{minipage}
  \hspace{0.04\columnwidth} 
  \begin{minipage}[b]{0.45\columnwidth}
    \centering
    \includegraphics[width=\columnwidth]{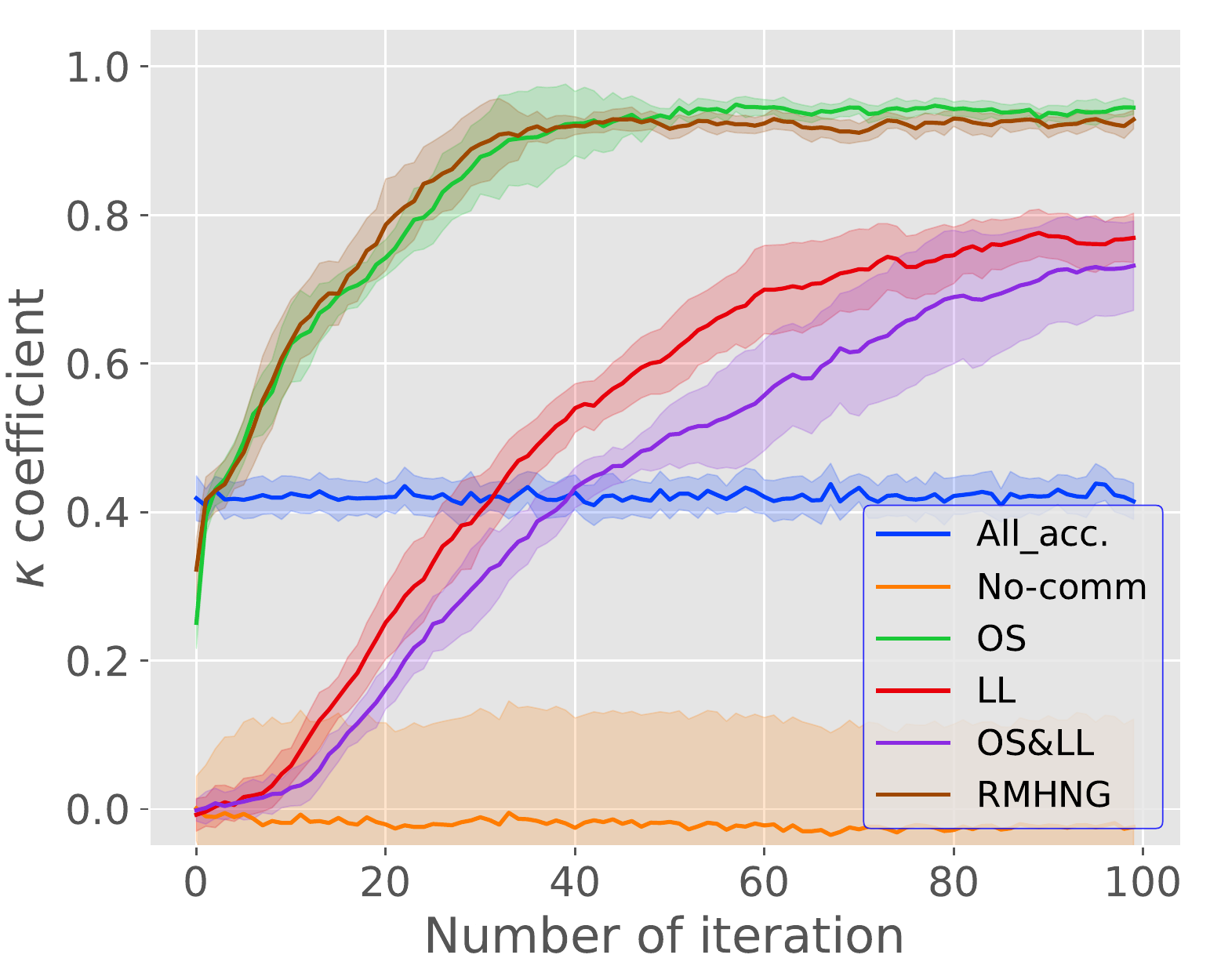}
  \end{minipage}
  \caption{ARI (left) and $\kappa$ (right) for each iteration when using artificial data}
\label{fig:syn-flow}
\end{figure}

{\bf{Computation time}}: Figure~\ref{fig:time_process} shows the average computation time for varying values of $M$ and $T$ in RMHNG. As shown in the figure, it can be seen that the computation time increases logarithmically as $T$ increases. Considering that the vertical axis is logarithmic, this confirms that the computation time follows the computational complexity of $O(T^{M-1})$. Additionally, it can be confirmed that significant reductions in computation time can be achieved by approximating RMHNG with OS ($T=1, M=3$), LL ($T=4, M=1$), or OS\&LL ($T=1, M=1$). Specifically, RMHNG ($T=4, M=3$) took 3178 seconds, OS ($T=1, M=3$) took 77 seconds, LL ($T=4, M=1$) took 187 seconds, and OS\&LL ($T=1, M=1$) took 71 seconds. 


{\bf{Decentralized posterior inference}}: Figure~\ref{fig:syn_matching} shows the results of calculating how closely the sign distribution obtained by each method matches that obtained by Gibbs sampling in the last 10 iterations (91-100 iterations) for each method. RMHNG shows a value of 0.96, indicating that the sign distribution obtained by RMHNG matches that obtained by Gibbs sampling by 96\%. This confirms that RMHNG is an approximate Bayesian estimator for the posterior distribution $p(w\mid x^1,x^2,\ldots,x^N,\theta^1,\theta^2,\ldots,\theta^N)$. Additionally, OS shows a value of 0.9 or higher, indicating that it is also an approximate Bayesian estimator for the posterior distribution $p(w\mid x^1,x^2,\ldots,x^N,\theta^1,\theta^2,\ldots,\theta^N)$. Although LL and OS\&LL have lower values compared to LL and OS, respectively, they are found to have higher matching rates with the sign distribution obtained by Gibbs sampling than No communication and All acceptance.

\begin{figure}[tb!p]
    \centering
    \scalebox{0.5}{
    \includegraphics[width=1.0\linewidth]{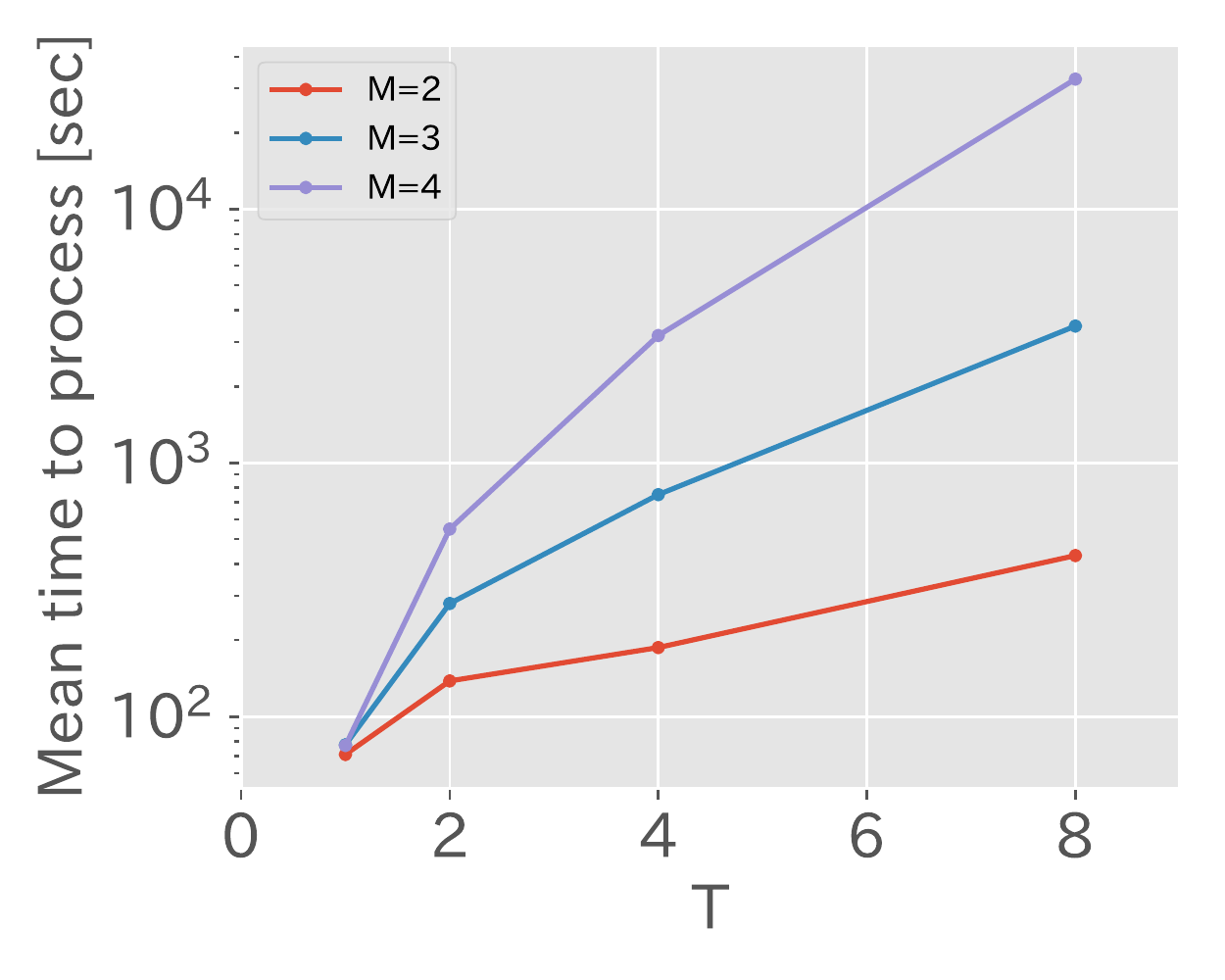}
    }
    \caption{This shows the mean of computation time when changing values of $M$ and $T$ in RMHNG. The horizontal axis represents the value of $T$, while the vertical axis represents the mean of computation time using a logarithmic scale with a base of 10. }
    \label{fig:time_process}
\end{figure}

\begin{figure}[tb!p]
    \centering
    \scalebox{0.6}{
    \includegraphics[width=1.0\linewidth]{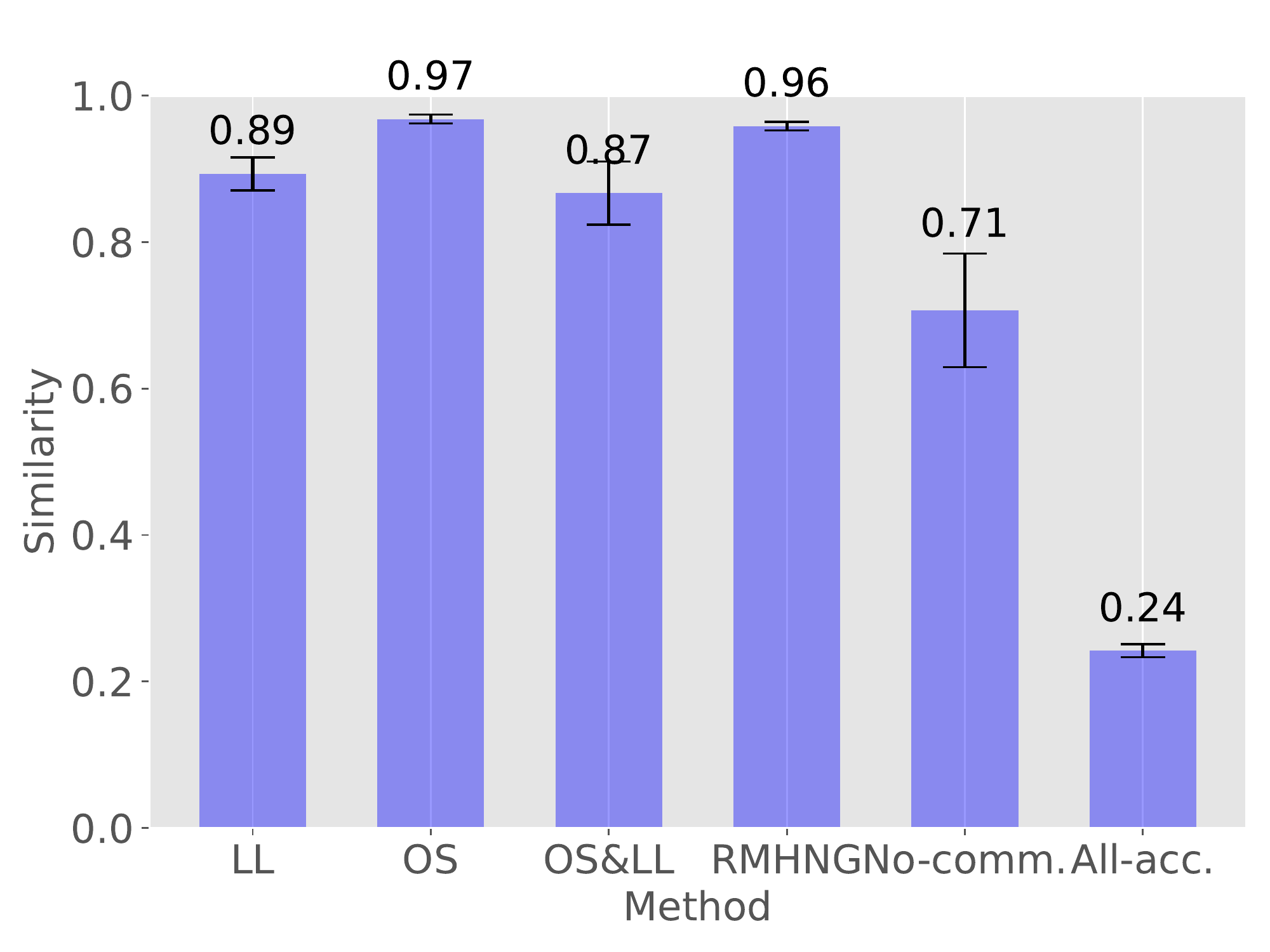}
    }
    \caption{Distribution of signs obtained by various methods and degree of agreement between the distribution of signs obtained by Gibbs sampling}
    \label{fig:syn_matching}
\end{figure}

\section{Experiment 2: YCB object dataset}\label{sec:YCB}
\subsection{Conditions}
We evaluated RMHNG using the multi-agent Inter-GMM with four agents ($N=4$) on a real image dataset. For all experiments (except for measuring computation time), the number of iterations ($I$) was set to 100, and each experiment was conducted five times.

{\bf{Dataset}}: We evaluated the performance of RMHNG using the YCB object dataset. We selected several objects from the dataset, and their names are listed in Figure~\ref{fig:overview}~(a). Figure~\ref{fig:YCB_overview}~(b), shows an overview of the dataset, where we divided the images of each object into four sets and assigned each set to one of the four different agents. Each set consisted of 30 images.
Specifically, images ranging from 0° to 87° were assigned to agent 1, those from 90° to 177° were assigned to agent 2, those from 180° to 267° were assigned to agent 3, and those from 270° to 267° were assigned to agent 4. 

{\bf{Feature extraction}}: Firstly, we cropped the original images from $4272 \times 2848$ to a size of $2000 \times 2000$ from the center. Next, we reduced the cropped images to a size of $300 \times 300$ to prevent any degradation in image quality. Finally, we cropped the images further to a size of $224 \times 224$ from the center. We used the resulting images as the observations for each agent, denoted as $o^n_d$~\footnote{We conducted two rounds of cropping for two reasons. Firstly, direct reduction of an ultra-high-resolution image can lead to degradation of image quality, so it was necessary to crop the image to a certain size before reducing it. Secondly, some objects were almost entirely obscured or markers were largely reflected, making it difficult to crop directly from the original image.}.

Feature extraction was performed using SimSiam~\citep{chen2021exploring}, a representation learning method based on self-supervised learning, pre-trained on the collected cropped YCB-object dataset. The feature extractor outputted $512$-dimensional vectors. To address the issue of high feature dimensionality compared to the small amount of data available, principal component analysis (PCA) was used to reduce the features to $10$ dimensions\footnote{This was conducted to avoid zero variance in dimensions that cannot be inferred, preventing divergence of the accuracy matrix of GMMs.}. Figure~\ref{fig:pca10} shows a visualization of the features of all data and the features observed by each agent using PCA. From this figure, it can be expected that some degree of categorization is possible.

{\bf{Hyperparameters}}:
 All agents were assigned the same hyperparameters, with values set as follows: $\beta=1$, $m=0$, $W=100 \times \mathbf{I}$, and $\nu=1$, where $\mathbf{I}$ is a $10$-dimensional identity matrix $\mathbf{I}$.

{\bf Compared method} and {\bf evaluation criteria} are the same as those in  the Experiment 1.

\begin{figure}[tb!p]
    \centering
    \includegraphics[width=1.0\linewidth]{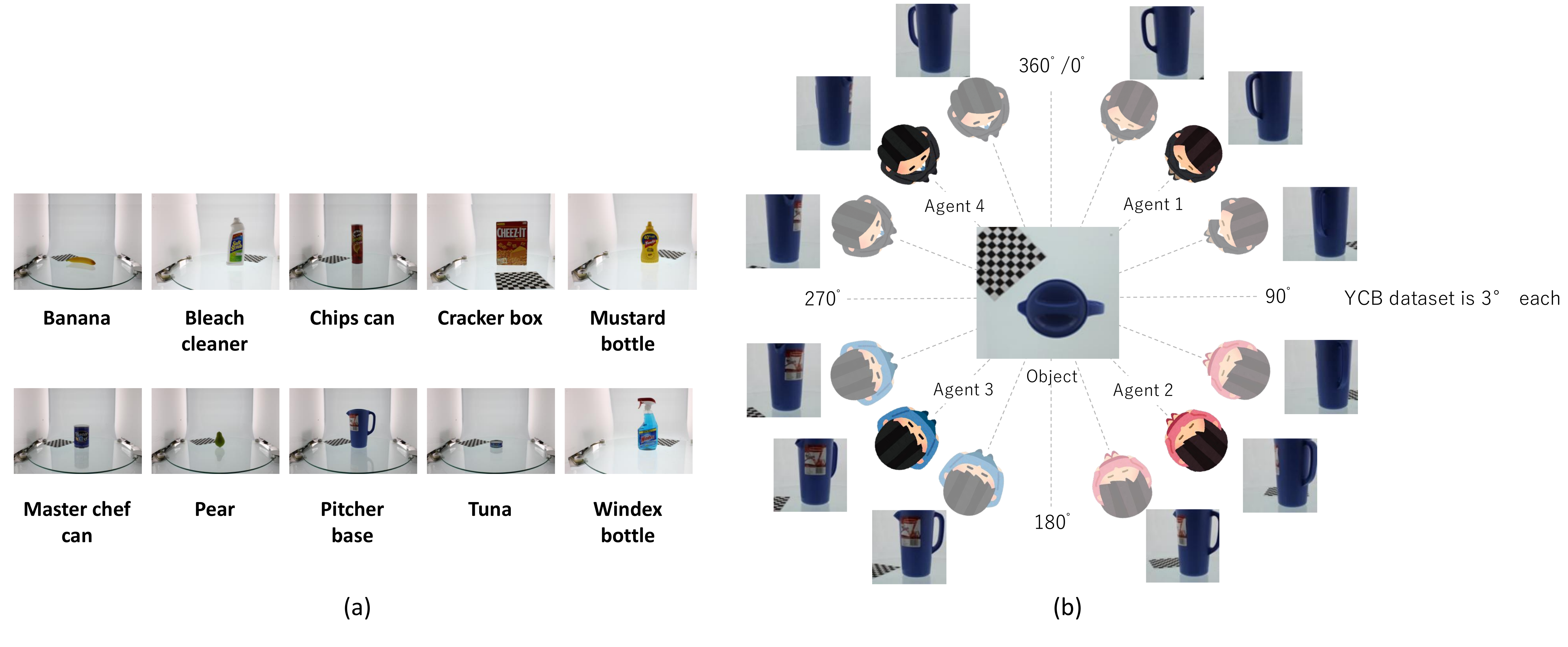}
    \caption{(a): Type of YCB object dataset utilized in the experimental analysis (b): Partition diagram of YCB object dataset. we divided the images of each object into four sets and assigned each set to one of the four different agents. Each set consisted of 30 images.
Specifically, images ranging from 0° to 87° were assigned to agent 1, those from 90° to 177° were assigned to agent 2, those from 180° to 267° were assigned to agent 3, and those from 270° to 267° were assigned to agent 4. }
    \label{fig:YCB_overview}
\end{figure}

\begin{figure}[tb!p]
    \centering
    \includegraphics[width=1.0\linewidth]{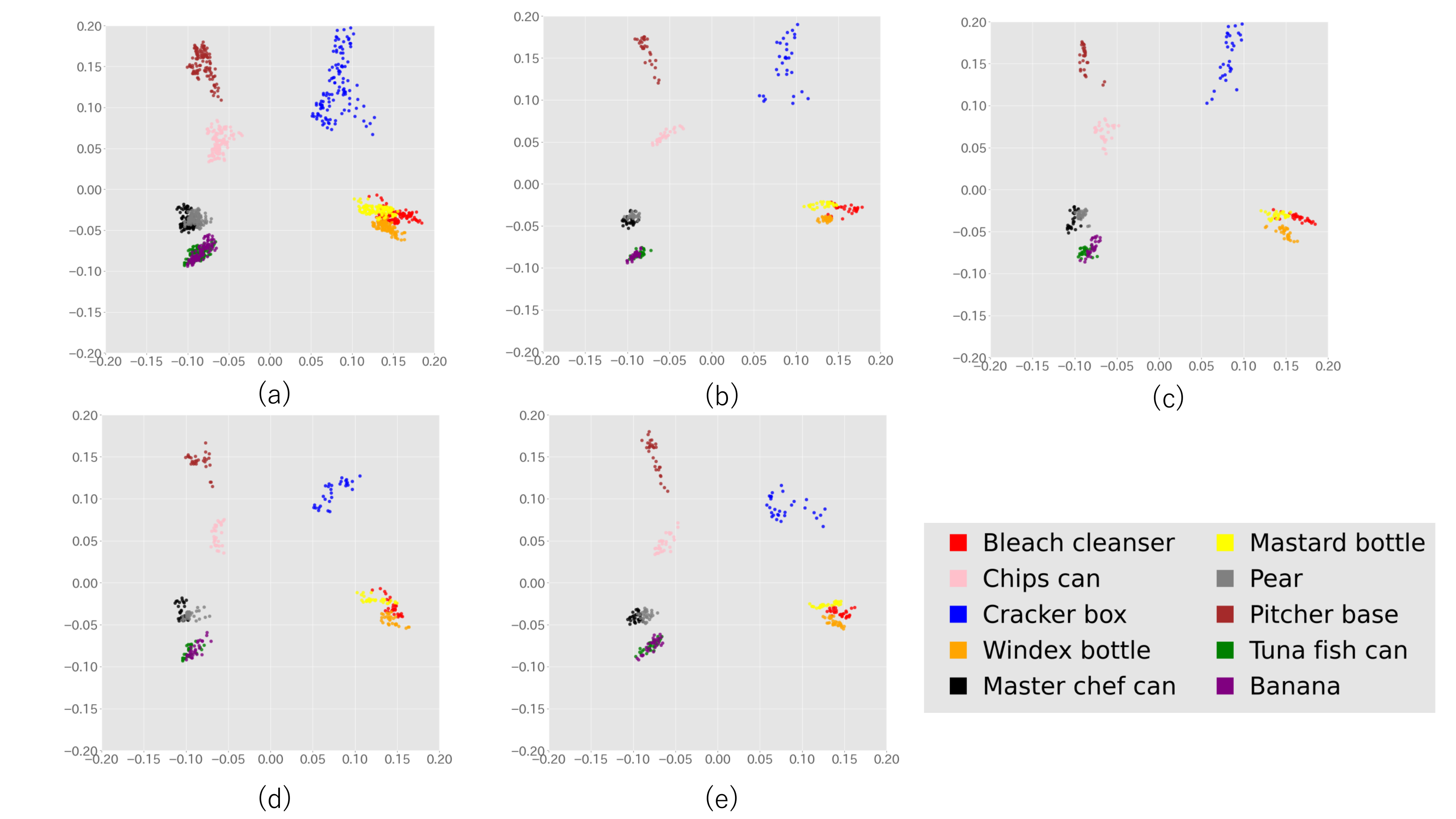}
    \caption{Features of the entire dataset and the features of individual agents' observations are visualized by 2D-PCA (a): Features of all data visualized by 2D-PCA (b): PCA visualization of Agent 2's observations (c): PCA visualization of Agent 3's observations (D): PCA visualization of Agent 4's observations}
    \label{fig:pca10}
\end{figure}
  
\subsection{Result}
{\bf {Categorization and sharing signs}}: Table~\ref{tab:result-ycb} shows the ARI and $\kappa$ for each method on the YCB object dataset. It is observed that RMHNG and Gibbs sampling have similar category classification accuracy with a maximum difference of only 0.04. Among the RMHNG approximations, OS had the highest ARI and $\kappa$ values. Interestingly, it showed a value close to that of RMHNG for the $\kappa$. OS\&LL had the lowest values for both ARI and $\kappa$. However, the difference in ARI between LL, OS, and OS\&LL was at most 0.02, indicating similar performance. In the YCB object dataset experiments, although OS showed higher ARI than RMHNG in the synthetic data experiment, for Agent1, OS showed lower ARI than RMHNG, while for other agents, it showed similar values. Compared to OS\&LL, No communication showed equivalent ARI for Agent1, lower ARI for Agent2, and higher ARI for other agents. However, the $\kappa$ was the lowest among all methods for No communication. All acceptance had the lowest ARI among all methods and the highest $\kappa$ among all methods.

\begin{table}[bt!p]
\centering
        \caption{Experimental results for YCB object dataset: Each method was tested five times, and for each agent, the ARI and $\kappa$ were calculated when $I$ was $91-100$. The mean $\pm$ standard deviation of obtained $50 (5 \times 10)$ARI and $\kappa$ are shown. Highest scores are shown in bold, and second-highest scores are underlined.}
        \label{tab:result-ycb}
\scalebox{0.8}{
\begin{tabular}{cccccc}
\hline
                 & ARI         & ARI         & ARI         & ARI         &           \\
conditon         & (Agent 1)   & (Agent 2)   & (Agent 3)   & (Agent 4)   & $\kappa$         \\ \hline
RMHNG            & $\mathbf{0.61\pm0.05}$    & $\mathbf{0.59\pm0.05}$   & $\mathbf{0.59\pm0.05}$   & $\mathbf{0.59\pm0.05}$   & $\underline{0.99\pm0.04}$ \\
OS        & $\underline{0.59\pm0.06}$   & $\mathbf{0.59\pm0.06}$   & $\mathbf{0.59\pm0.08}$   & $\mathbf{0.59\pm0.08}$   & ${0.98\pm0.05}$ \\
LL        & ${0.56\pm0.11}$   & $\underline{0.57\pm0.09}$    & $\underline{0.56\pm0.11}$   & $\underline{0.56\pm0.11}$   & $0.96\pm0.06$ \\
OS\&LL   & $0.55\pm0.07$   & ${0.55\pm0.07}$   & ${0.54\pm0.06}$   & ${0.54\pm0.06}$  & ${0.95\pm0.09}$ \\
No communication & $0.55\pm0.1$  & $0.5\pm0.1$   & ${0.55\pm0.07}$   & ${0.55\pm0.08}$   & $-0.03\pm0.08$ \\
All acceptance   & ${0.47\pm0.08}$ & ${0.47\pm0.08}$ & ${0.47\pm0.08}$ & ${0.47\pm0.08}$ & $\mathbf{1.0\pm0.0}$ \\ \hline
Gibbs sampling   & \multicolumn{4}{c}{$0.63 \pm 0.05$}                        & -         \\ \hline
\end{tabular}
}
\end{table}

{\bf{Change in ARI and $\kappa$ for each iteration}}: Figure~\ref{fig:fycb-flow} shows the ARI (right) and $\kappa$ (left) for each iteration ($i$) in Algorithm~\ref{alg:RMHNG} for various methods, while Figure~\ref{fig:syn-flow} shows the convergence of the $\kappa$ for synthetic data.
From the left figure in Figure~\ref{fig:fycb-flow}, we can see that the RMHNG method has the fastest convergence of ARI, followed by OS, OS\&LL, and LL. Regarding OS\&LL, we can see that ARI did not converge when using synthetic data, but it did converge when using the YCB object dataset. No communication had the fastest convergence of ARI among all the methods. As for All acceptance, we can see that ARI did not show an increasing trend with iteration in synthetic data, but it did show an increasing trend when using the YCB object dataset.
From the right figure in Figure~\ref{fig:syn-flow}, we can see that the RMHNG method had the fastest convergence of the $\kappa$, followed by OS, LL, and OS\&LL. No communication did not show any increasing trend compared to other methods. As for All acceptance, we can see that the $\kappa$ did not show an increasing trend with iteration when using synthetic data, but it did show an increasing trend when using the YCB object dataset.

{\bf{Decentralized posterior inference}}: Figure~\ref{fig:ycb_matching} shows the results of calculating the degree of similarity between the distribution of the sign obtained by each method and that obtained by Gibbs sampling in the last 10 iterations (91-100 iterations) for each method. RMHNG showed a value of 0.76, indicating that the distribution of the sign obtained by RMHNG matched that obtained by Gibbs sampling by 76\%. Among the methods that approximated RMHNG, OS showed the highest value, both in the synthetic data experiment and the YCB object dataset experiment. Additionally, all approximation methods showed higher values than No communication.
\begin{figure}[tb!p]
\centering
  \begin{minipage}[b]{0.45\columnwidth}
    \centering
    \includegraphics[width=\columnwidth]{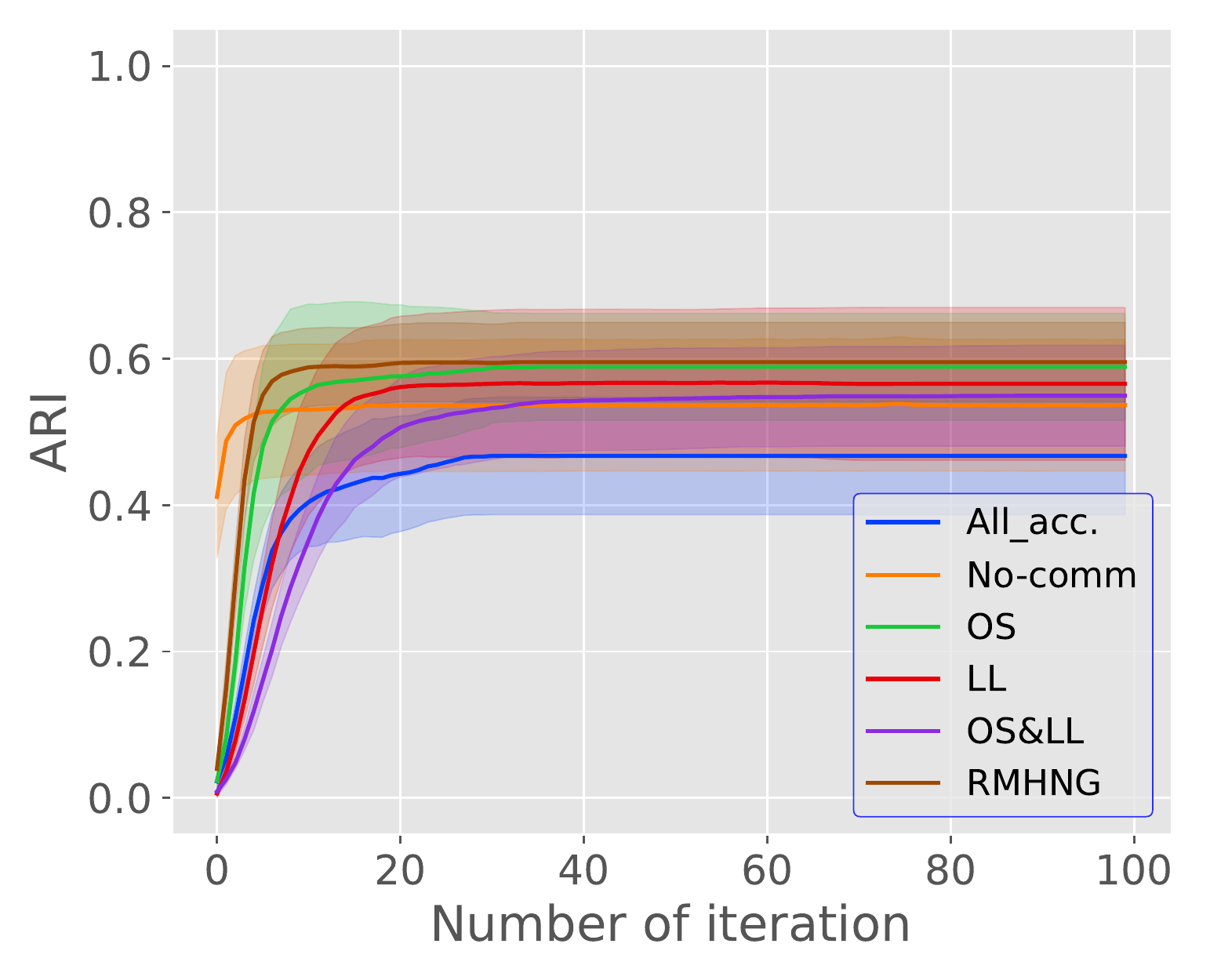}
  \end{minipage}
  \hspace{0.04\columnwidth} 
  \begin{minipage}[b]{0.45\columnwidth}
    \centering
    \includegraphics[width=\columnwidth]{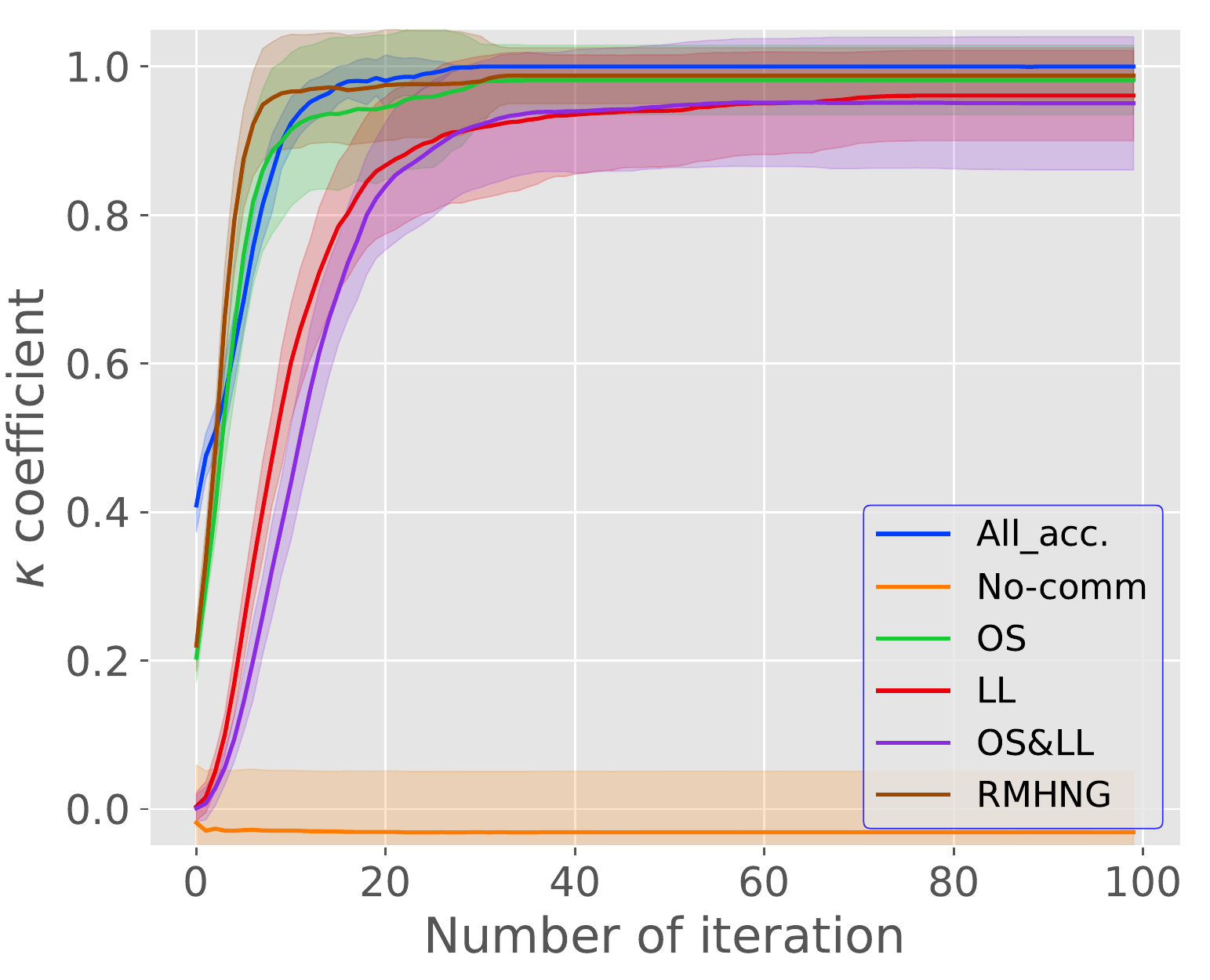}
  \end{minipage}
  \caption{ARI (left) and $\kappa$ (right) for each iteration when using YCB object dataset}
\label{fig:fycb-flow}
\end{figure}

\begin{figure}[tb!p]
    \centering
    \scalebox{0.7}{
    \includegraphics[width=1.0\linewidth]{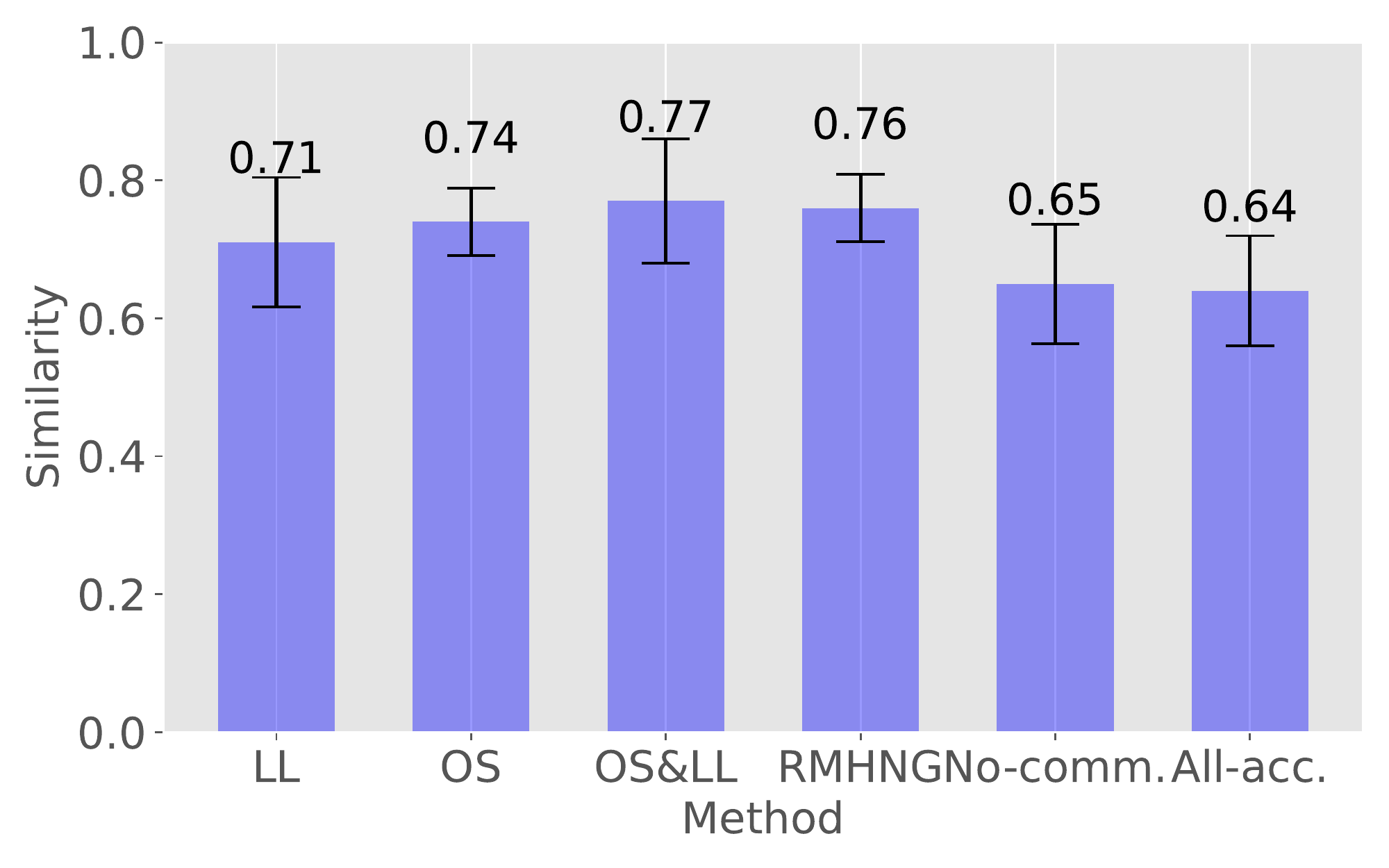}
    }
    \caption{Percentage agreement with Gibbs sampling.}
    \label{fig:ycb_matching}
\end{figure}


When comparing Table~\ref{tab:result-synthetic} and Table~\ref{tab:result-ycb}, we observed that the ARI values for RMHNG, OS, LL, and OS\&LL were higher for the synthetic dataset, while the $\kappa$ were higher for the YCB object dataset. This may be because the YCB object dataset is easier to categorize when observed partially by an agent, but some objects are so similar that agents naturally regard them as instances of a single category. For example, in Figure~\ref{fig:pca10}, the mustard bottle, bleach cleanser, and Windex bottle have similar feature distributions, making it difficult to cluster them according to the ground truth. Therefore, using the YCB object dataset leads to a decrease in ARI. As a result, when using the YCB object dataset, there was little difference in ARI for No communication and methods based on RMHNG compared to when using the synthetic dataset.

\section{Conclusion}\label{sec:conclusion}
In this study, we extended the MHNG to the $N$-agent scenario by introducing the RMHNG, which serves as an approximate decentralized Bayesian inference method for the posterior distribution shared by agents, similar to the MHNG. We demonstrated the effectiveness of RMHNG in enabling multiple agents to form and share a symbol system using synthetic and real image data. To address computational complexity, we proposed two types of approximations: OS and LL approximations. Evaluation metrics, such as the ARI and the $\kappa$, were used to assess the performance of communication in each iteration of the naming game. Results showed that the $4$-agent naming game successfully facilitated the formation of categories and effective sign-sharing among agents.
Moreover, the approximated RMHNG exhibited higher ARI and $\kappa$ compared to the No communication condition, showing that the approximate version of RMHNG could perform symbol emergence in a population. Additionally, we assessed the agreement between the sign distributions obtained by RMHNG and Gibbs sampling, confirming that RMHNG approximates the posterior distribution with a degree of agreement exceeding $87\%$ for the synthetic data and $71\%$ for the YCB object data. This result demonstrates that RMHNG could successfully approximate the posterior distribution over signs given every agent's observations.
Several future perspectives emerge from this study. Firstly, we plan to analyze the behavior of the RMHNG in populations with a larger number of agents. Although we focused on the $4$-agent scenario due to the computational cost of the original RMHNG ($O(IDT^{(N-1)})$), we empirically observed that the OS approximation performed well in many cases. Unlike the original RMHNG, the OS-approximated version exhibits scalability in terms of the number of agents ($O(N)$), enabling simulations with larger populations. This scalability opens up possibilities for providing valuable insights into language evolution through the MHNG framework.
Additionally, extending the categorical signs to more complex signs, such as sequences of words, represents a natural progression for our research. Investigating the dynamics of communication with more intricate sign systems will shed light on the evolution and complexity of language.

\bibliographystyle{Frontiers-Harvard} 
\bibliography{test}

\section*{Acknowledgments}
This study was partially supported by the Japan Society for the Promotion of Science (JSPS) KAKENHI under Grant JP21H04904 and JP18K18134 and by MEXT Grant-in-Aid for Scientific Research on Innovative Areas 4903 (Co-creative Language Evolution), 17H06383.

\appendix
\section*{Appendix}
\section{Evaluation of the decentralized posterior inference architecture}\label{apdx:bipartite}

In order to evaluate the efficacy of RMHNG as an approximate Bayesian estimator for the posterior distribution $p(w\mid x^1,x^2,\ldots,x^N,\theta^1,\theta^2,\ldots,\theta^N)$, a comparison was made between RMHNG and the actual posterior distribution. However, direct computation of the true posterior distribution $p(w\mid x^1,x^2,\ldots,x^N,\theta^1,\theta^2,\ldots,\theta^N)$ presented significant challenges. Therefore, the evaluation focused on the degree of concurrence between the sign distribution generated by RMHNG and that produced by Gibbs sampling.
The evaluation was conducted for the last 10 iterations (i.e., $91 - 100$ iterations) of RMHNG and Gibbs sampling. Let $f_{d,w}^R$ and $f_{d,w}^G$ be the number of times the word $w$ was sampled using RMHNG and Gibbs sampling, respectively, for the $d$-th dataset. The similarity between the two methods was calculated as $\sum_{d=1}^{D}\sum_{k=1}^{K}\min(f_{d,k}^R,f_{d,k}^G)$.
However, due to the singularity of the GMM, label switching (i.e., swapping of signs) between different inference results needed to be addressed. To solve this problem, bipartite graph matching was performed to correspond a clustering result with another.
To perform bipartite graph matching, the sign obtained by RMHNG was considered as the point set $V^R=\{v^R_0,v^R_1,\ldots,v^R_K\}$, and the sign obtained by Gibbs sampling was considered as the point set $V^G=\{v^G_0,v^G_1,\ldots,v^G_K\}$. The edge connecting $V^R_i$ and $V^G_j$ was denoted by $E_{i,j}$, and the set of all edges was denoted by $E=\{e_{0,0},e_{0,1},\ldots,e_{i,j},\ldots,e_{K,K}\}$. The graph $G=(V^G \lor V^R,E)$ was a complete bipartite graph. If the gain of each pair $(v^R_a,v^G_b)$ was $\sum_{d=1}^{D}\min(f_{d,a}^R,f_{d,b}^G)$, then the sign replacement problem could be reduced to a weighted maximum bipartite matching problem.
To simplify the problem further, the gain of each pair was multiplied by $(-1)\times \sum_{d=1}^{D}\min(f_{d,a}^R,f_{d,b}^G)$. This reduced the weighted maximum two-part matching problem to a minimum cost flow problem, which could be solved using the Hungarian method. Finally, the similarity was calculated by $\frac{1}{10D}\sum_{d=1}^{D}\sum_{k=1}^{K}\min(f_{d,k}^R,f_{d,k}^G)$, where $\frac{1}{10D}$ was a normalization factor.

\section{ARI and $\kappa$}\label{apdx:ari}
ARI is a widely used measure for evaluating clustering performance by comparing the clustering results with the ground-truth labels. Unlike precision, which is calculated by directly comparing estimated labels to ground-truth labels and often used in the evaluation of classification systems trained using supervised learning, ARI considers label-switching effects in clustering. The formula for ARI is given by Equation (\ref{eq:ari}), where RI represents the Rand Index. Further details can be found in \citet{hubert1985comparing}.
\begin{align}
{\rm ARI} = \frac{\rm{RI - Expected\ RI}}{{\rm Max\ RI - Expected\ RI}}
\label{eq:ari}
\end{align}

The kappa coefficient ($\kappa$) is defined by Equation (\ref{eq:kappa}): \begin{align} \kappa = \frac{C_o - C_e}{1-C_e} \label{eq:kappa}. \end{align} Here, $C_o$ represents the degree of agreement of signs among agents, and $C_e$ denotes the expected value of coincidental sign agreement. The interpretation of $\kappa$ is as follows~\citep{landis1977measurement}: 
\begin{enumerate}
\item Almost perfect agreement: ($1.0 \geq \kappa > 0.80$)
\item Substantial agreement: ($0.80 \geq \kappa > 0.60$)
\item Moderate agreement: ($0.60 \geq \kappa > 0.40$)
\item Fair agreement: ($0.40 \geq \kappa > 0.20$)
\item Slight agreement: ($0.20 \geq \kappa > 0.00$)
\item No agreement: ($0.0 \geq \kappa$)
\end{enumerate}

\section{Feature extraction by SimSiam}
As a feature extractor, we utilized SimSiam~\citep{chen2021exploring}, a self-supervised representation learning technique that was pre-trained on the YCB object dataset. We followed the same network architecture and hyperparameters as outlined in the original paper~\citep{chen2021exploring}, but with a few minor adjustments.
For data augmentation, we used the following parameters, using PyTorch notation~\footnote{PyTorch, Transforming and augmenting images: \url{https://pytorch.org/vision/stable/transforms.html} }.
\textsc{RandomResizedCrop} with a scale in the range of $[0.1, 0.6]$, and \textsc{RandomGrayscale} with a probability of $0.2$. We normalized tensor images using the $\textsc{Normalize}$ function with a mean of $(0.485, 0.456, 0.406)$ and standard deviation of $(0.229, 0.224, 0.225)$
We used ResNet-18 as the \textsc{backbone}~\citep{He_2016_CVPR} and set the dimension of the output feature vector to $512$. A two-layer fully connected layer was employed as the \textsc{projector}, using an intermediate layer with a dimension of $512$. The predictor also utilized a two-layer fully connected layer, with an intermediate layer dimension of $128$. During the training phase, we set the learning rate to $0.1$ and utilized stochastic gradient descent as the optimizer. The batch size was set to 64, and we trained the network for $100$ epochs.
\end{document}